\newtheorem{theorem}{Theorem}
\newtheorem{definition}[theorem]{Definition}
\newtheorem{apptheorem}{Theorem}
\newcounter{example}
\newenvironment{example}[1][]{\refstepcounter{example}\par\medskip
   \noindent \textbf{Example~\theexample. #1} \rmfamily}{\medskip}
\newcounter{notation}
\newenvironment{notation}[1][]{\refstepcounter{notation}\par\medskip
   \noindent \textbf{Notation~\theexample. #1} \rmfamily}{\medskip}
\title{Moral Uncertainty and the Problem of Fanaticism}
\author {
    Jazon Szabo\textsuperscript{\rm 1},
    Jose Such\textsuperscript{\rm 1,2},
    Natalia Criado\textsuperscript{\rm 2},
    Sanjay Modgil\textsuperscript{\rm 1}
}
\begin{document}

\maketitle

\begin{abstract}

While there is universal agreement that agents ought to act ethically, there is no agreement as to what constitutes ethical behaviour. To address this problem, recent philosophical approaches to `moral uncertainty' propose aggregation of multiple ethical theories to guide agent behaviour. However, one of the foundational proposals for aggregation --  Maximising Expected Choiceworthiness (MEC) --   has been criticised as being vulnerable to \textit{fanaticism}; the problem of an ethical theory dominating agent behaviour despite low credence (confidence) in said theory. Fanaticism thus undermines the `democratic' motivation for accommodating multiple ethical perspectives. The problem of fanaticism has not yet been mathematically defined. Representing moral uncertainty as an instance of social welfare aggregation, this paper contributes to the field of moral uncertainty by 1) formalising the problem of fanaticism as a property of social welfare functionals and  2) providing non-fanatical alternatives to MEC, i.e. Highest k-trimmed Mean and Highest Median.
\end{abstract}

\section{Introduction}
The recently proposed study of \emph{moral uncertainty} represents a paradigm shift in how philosophers think about ethics \cite{macaskill2020moral}. 
Instead of aiming at a `one size fits all' approach, moral uncertainty acknowledges that  different ethical perspectives have differing strengths and weaknesses, and that it is rarely the case that there is universal agreement on any given moral issue.  Therefore, under moral uncertainty, an agent aggregates different ethical perspectives so as to yield an overall evaluation. 
In particular such an agent has some \emph{credence} (i.e.  degree of acceptance) in not just one but rather multiple ethical theories. Each theory evaluates the agent's available actions by assigning each action a degree of \emph{choiceworthiness}. A positive choiceworthiness denotes a good outcome while a negative choiceworthiness denotes a bad outcome; the larger the magnitude, the better or worse the outcome of the action, respectively. The agent chooses an action based on both the theories' credences   and the choiceworthiness of the actions.

A fundamental question in moral uncertainty is how to trade off the theories' credences and the actions' choiceworthiness when aggregating the evaluations of different ethical theories. The most influential proposal -- \emph{Maximising Expected Choiceworthiness} (MEC) \cite{macaskill2014normative} --  treats moral uncertainty analogously to empirical uncertainty. Credence corresponds to probability and choiceworthiness corresponds to utility, while MEC itself corresponds to maximising the expected utility.

However, MEC is arguably unsuitable for agent decision making, given its  vulnerability to the \emph{problem of fanaticism} \cite{ross2006rejecting,macaskill2020moral}. That is to say, under MEC, decision making can be dominated by theories  that assign very high stakes to most moral situations (i.e. are evaluated as either extremely desirable or undesirable), even if the agent has low credence in these theories. Hence, the agent's behaviour may be completely determined by only a subset of low-credence theories while completely ignoring other theories. 

We concur with  \cite{macaskill2020moral} that fanaticism is unacceptable, since it allows for theories to act as dictators or oligarchs (in the social choice sense of these terms). Thus fanaticism completely undermines democratic motivation for accommodating multiple ethical perspectives. Relatedly, fanaticism subverts any societal endorsement  and trust; a society is unlikely to accept agents which may entirely ignore ethical perspectives that the society has high credence in. 

The study of \emph{machine ethics} \cite{moor2006nature} has sought to understand the moral implications of agent behaviour on human stakeholders, so as to design machines that can act ethically, even without human supervision \cite{anderson2005towards}. However, given prevalent significant moral disagreement, both within and between societies \cite{haidt2012righteous}\footnote{\label{foot:cirl}Indeed, the assumption that \textit{individuals} apply a single ethical perspective is questionable \cite{macaskill2020moral}. Hence proposals for value alignment (e.g., \cite{CIRL}) necessitate the learning of individual's ethical preferences.}, it is unclear as to exactly which ethical theory  should guide agents' actions \cite{DBLP:conf/icml/EcoffetL21}. This is especially troubling for machine ethics, where a common methodology has been to implement a single ethical theory.  
As emphasised in \cite{gabriel2020artificial}, given the diversity of ethical perspectives, an agent that acts according to a single ethical theory may not receive societal endorsement, thus jeopardising the project of machine ethics. Indeed, recent work in machine ethics draws on insights from moral uncertainty \cite{DBLP:journals/mima/Bogosian17,bhargava2017autonomous,martinho2021computer,DBLP:conf/aies/DobbeGM20,DBLP:conf/icml/EcoffetL21}, and advocate for the identification of principles for the fair aggregation of various ethical perspectives. Note that, despite the aforementioned issues, all existing work lack a formal definition of fanaticism.

Thus, we aim to remedy this limitation by formally defining fanaticism. Moreover, we provide a critique of MEC as the foundational approach to moral uncertainty. While MEC's vulnerability to fanaticism is widely known, no convincing alternatives have been presented.  
We therefore propose non-fanatical methods for resolving moral uncertainty.
In particular, we make the folllowing three \textbf{contributions}:

\textbf{1)} We  formalise  moral uncertainty as social welfare aggregation (as informally proposed by \cite{macaskill2020moral}). Drawing on accepted informal definitions, we formally define fanaticism as a property of social welfare functionals (\textit{swf}s). We do so by precisely defining what it means for a theory to be held with `low credence' (by giving a graded definition of fanaticism) and what it means for a theory to `dominate' (by defining dominant subsets).

\textbf{2)} We define novel, weighted versions of the \textit{swf}s k-trimmed Highest Mean and Highest Median.
    
\textbf{3)} We prove that MEC is in fact maximally fanatical (`Pascalian' fanatical). We prove that neither novel weighted \textit{swf}s are Pascalian and that Highest Median is not fanatical at all. We thereby argue in favour of replacing MEC with one of the proposed novel weighted \textit{swf}s.

The paper is \textbf{structured} as follows. Section \ref{sec:example} presents a running example that illustrates the fanaticism of MEC and introduces concepts and intuitions that will be referenced in later sections. Section \ref{sec:bg} briefly recapitulates background on social welfare aggregation. Moral uncertainty is then defined in social welfare terms, and MEC is defined as an instance of social welfare aggregation
in Section \ref{sec:moral-uncertainty}. We also formalise three weighted alternatives to MEC: Maximin, k-trimmed Highest Mean and Highest Median. We subsequently provide a formal definition of fanaticism  in terms of  \textit{swf}s in Section \ref{sec:fanaticism:def}.   Then Section \ref{sec:fanaticism:thems} states this paper's key results: relating to the extent to which the aforementioned \textit{swf}s are vulnerable to fanaticism.    

\section{Running example}\label{sec:example}
We present a running example scenario to illustrate  the intuitions underlying our formalism. Consider a small mobile \emph{firefighter robot} FROBO assisting a fire brigade. FROBO's objective is to contain fires in hard-to-access rooms in order to give human firefighters more time to reach these rooms. FROBO has a moral obligation to save lives when it can.

However,  different ethical perspectives  imply different choices of action in order to comply with this  obligation. For example, a version of \emph{utilitarianism} \cite{sep-consequentialism}  interprets the obligation to save lives in terms of maximising the expected number of lives saved. Whereas according to a version of \emph{deontology} \cite{sep-ethics-deontological}, the  obligation to save lives  is interpreted as an absolute obligation.  That is to say, when encountering a person who is in immediate danger of dying, then (\textit{ceteris paribus}\footnote{E.g., if another action option implies saving 2 lives in immediate danger rather than 1 life in immediate danger, then the obligation to save the two would take priority.}) FROBO has an absolute obligation to save that person \cite{kernohan2021descriptive,tarsney2018moral}. In concrete situations these different interpretations may contradict one another.

For example, suppose that after climbing through rubble in a burning residential building, FROBO arrives in a smoke filled hallway. FROBO knows that firefighters will be able to clear the rubble and enter the hallway in about 5 minutes. Until then FROBO is on its own. The hallway has  two doors; one on the right and one on the left. The \emph{right} hand door is open and leads to a burning room containing a collapsed person. FROBO estimates that this room will be burnt out in less than 5 minutes (i.e. before the firefighters arrive), unless FROBO controls the flames with its built-in extinguisher. On the other hand, the door to the \emph{left} is closed. FROBO knows that 4 people are listed as residents of this room. Furthermore, FROBO estimates that it can break down this door just in time for the firefighters arrive. However, the agent doesn't know whether the residents are still home or have managed to escape. Based on the available information, the agent estimates that there is a 50\% chance that the residents are still in the room. Unfortunately, FROBO only has time to exclusively attend either to  the left or right room.

While both of FROBO's theories are in agreement that neither options yield an overall increase in `the good', they disagree on which action is less bad (i.e.`better'). According to FROBO's utilitarian calculations, the agent should choose the left room: the agent can save an expected 2 people, which is better than the 1 person in the right room. According to FROBO's deontological imperative, FROBO should choose the right room given the immediately apparent prospect of the right room  occupant's death, and the possibility  that no one is in the left room.\footnote{ Utilitarianism and Deontology cover a large family of ethical varriants see \cite{sep-consequentialism} and  \cite{sep-ethics-deontological} respectively; FROBO's versions are particular instances.}
 

The exact numerical valuations in Table \ref{table:example}  are based on evaluations assigned in similar moral scenarios (e.g. in \cite{macaskill2020moral}). Importantly, the two theories' evaluations starkly differ in their order of magnitude. FROBO's implementation of deontology posits much higher stakes because
the evaluations of deontology are simply a numerical proxy to the normative force of the obligations they represent\footnote{This representation is inspired by \emph{threshold deontology} \cite{sep-ethics-deontological}, according to which while ethical obligations are not absolute, they have a strong normative force.}.

Indeed, real life examples can be similarly or even more extreme. According to international law,  the prohibition on torture cannot be violated no matter the consequences, e.g. a nation state cannot sanction torture even if it is the only way to prevent existentially catastrophic consequences \cite{assembly1948universal}. It is also worth noting that utilitarianism can also lead to extreme evaluations, such as those advocated by \emph{longtermism} \cite{macaskill2022we}. Thus the theories' different orders of magnitude cannot be simply `normalised away'.

\begin{table}
\centering
\begin{tabular}{llll}
\hline
 & \textbf{Utilitarianism} & \textbf{Deontology} \\ \hline
\textbf{Left room} & -1 & -10000 \\
\textbf{Right room} & -2 & -1000 \\ \hline
\end{tabular}
\caption{Evaluations FROBO's ethical theories}
\label{table:example}
\end{table}

To demonstrate the problem of fanaticism, consider that deontology's credence is very low (e.g. because the vast majority of FROBO's stakeholders advocate utilitarianism). Hence, FROBO assigns only $0.01$ credence to deontology and $0.99$ credence to utilitarianism. However, the deontological evaluations can dominate FROBO's behaviour. Using MEC, FROBO calculates the expected choiceworthiness  -- i.e. the credence-weighted sum of the choiceworthiness --  and picks the action that maximises the \textit{expected choiceworthiness} (\textit{e-cw}). In particular, the   \textit{e-cw} of the left room is $0.99\times(-1)+0.01\times(-10000)=-100.99$, and the  \textit{e-cw} of the right room is $0.99\times(-2)+0.01\times(-1000)=-11.98$. In both cases, the deontological theory
dominates, and so under MEC, FROBO chooses the right room. 

The above  illustrates the \emph{problem of fanaticism}.  FROBO's behaviour is solely dictated by the deontological theory  in which FROBO has very small credence, but which dominates the expected choiceworthiness calculations.  This is highly undesirable.  FROBO ignores utilitarianism despite  this theory being advocated by the vast majority of FROBO's stakeholders. Hence avoiding fanaticism is important, if FROBO is to obtain societal approval.
\section{Background} \label{sec:bg}
We introduce social welfare aggregation \cite{sep-social-choice}, by way of background for our definitions in  later sections. 

Let $N = \{1, ..., n\}$ be the set of \emph{individuals} (or voters) and $X = \{x, y, z, ...\}$  the set of \emph{social alternatives}. Each individual $i \in N$ has a \emph{welfare function} $u_i: X \to \mathbb{R}$, so $u_i(x)$ represents the welfare of individual $i$ under alternative $x$. A list of welfare functions for each individual $P = \langle u_1, ... u_n \rangle$ is called a \emph{profile}.

Let $D$ denote the domain of all possible profiles and $\mathbf{R}_X$ the set of possible total orders on the set of social alternatives $X$. Then, a \emph{social welfare functional} (\textit{swf}) $f: D \to \mathbf{R}_X$ maps each welfare profile to a total order on the set of social alternatives.

In social welfare aggregation, one can make different assumptions about how much information is encoded by the \textit{swf}s, via use of \emph{meaningful statements} \cite{sep-social-choice}. 
\begin{itemize}
    \item \emph{Level comparison}: Individual $j$'s welfare under alternative $y$ is at least as great as individual $i$'s welfare under alternative $x$; formally $u_i(x) \le u_j(y)$.
    \item \emph{Unit comparison}: We can divide $\delta_i$ by $\delta_j$, where $\delta_i$ is the number equal to individual $i$'s welfare gain or loss when switching from alternative $y_1$ to alternative $x_1$  and  $\delta_j$ is individual $j$'s welfare gain or loss when switching from alternative $y_2$ to alternative $x_2$; formally $\delta_i / \delta_j=(u_i(x_1)-u_i(y_1))/(u_j(x_2)-u_j(y_2))=w$, where $x_1, x_2, y_1, y_2 \in A$ and $w \in \mathbb{R}$.
    \item \emph{Zero comparison}: Individual $i$'s welfare under alternative $x$ is greater than, equal to or less than zero; formally $\mathit{sign}(u_i(x))=w$, where $w \in \{-1, 0, 1\}$ and $\mathit{sign}$ is a function that maps negative numbers to $-1$, zero to $0$, and positive numbers to $+1$.
\end{itemize}

In the above definitions a comparison is said to be \emph{intrapersonal} if $i = j$ and \emph{interpersonal} if $i \ne j$. In this paper we use the \emph{ratio-scale measurability with full interpersonal comparability} (RFC) assumption, i.e. that intra- and interpersonal comparisons of all three kinds (level, unit, and zero) are meaningful. Formally, RFC means  that two profiles $P = \langle u_1, u_2,..., u_n\rangle$ and $P' = \langle u'_1, u'_2,..., u'_n\rangle$ contain the same information if, for each individual $i \in N$, $u'_i=au_i$, where $a$ is the same positive real number for all individuals ($a \in \mathbb{R}^+$). Informally, RFC means that the different welfare functionals are assumed to be normalised to the same numeric scale and as such further normalisation is impossible.
\section{Moral uncertainty} \label{sec:moral-uncertainty}
We now formalise moral uncertainty  in terms of social choice, based on assumptions underpinning MEC.
Firstly, note that MEC assumes that ethical theories are on the same numerical scale, i.e. they are ratio-scale and that the evaluations of ethical theories can  be meaningfully compared across theories; MEC makes the RFC assumption\footnote{Moral uncertainty literature is yet to examine decision making under stronger assumptions than RFC.} \cite{macaskill2020moral}.  Since we are interested in  providing viable alternatives to MEC,  the formalisms presented in this paper also assume RFC. We therefore formalise ethical theories as individuals and their evaluations as welfare functions, while formalising credences as the `weights' of individuals. Then, MEC and other methods of resolving moral uncertainty are formalised as \textit{swf}s, and we provide novel alternatives to MEC:  the  weighted k-Trimmed Highest Mean and the weighted Highest Median. Section \ref{sec:fanaticism} then uses these definitions to first define the problem of fanaticism as a property of \textit{swf}s. We then give results that evaluate the extent to which these \textit{swf}s are fanatical.

\subsection{Ethical theories and ethical frameworks}
 
We now define how one can account for moral uncertainty when evaluating actions, as an instance of social welfare aggregation applied to ethical theories and their credences.

Recall that under moral uncertainty, the agent has multiple \emph{ethical theories}, each of which  provides a real-valued evaluation of the actions available to the agent.  For each theory $t$, the credence function $c$ assigns a measure of the extent (on a scale from $0$ to $1$) to which $t$ is advocated by a given society as being appropriate  for ethical evaluation of actions. Hence, an agent's  ethical decision making under moral uncertainty is defined on the basis of an 
\emph{ethical framework}:

\begin{definition} \label{def:ethical-framework} [Ethical framework]
An \emph{ethical framework} is a tuple $F = (T, c)$  consisting of a set of ethical theories $T$ and a credence function $c: T \to (0, 1]$. Given a set of actions $A$, each ethical theory $t \in T$ assigns a real-valued evaluation to each action $a \in A$, i.e. $t: A \to \mathbb{R}$.
\end{definition}

For simplicity we require that for $F = (T, c)$,  the theories' credences sum up to 1, i.e. $\sum_{t \in T}c(t) = 1$. Furthermore, note that in this work we are agnostic with respect to how the evaluations of ethical theories are elicited.

\begin{example}
In our running example, $A = \{l, r\}$ where  $l$ and $r$ respectively denote  the actions  `enter left room' and `enter right room'.   FROBO's ethical framework is $F_\mathit{FROBO} = (\{d, u\}, c)$, where
the deontological theory ($d$) evaluates $r$  as highly impermissible given the possibility that residents of the left room might die: $d(r) = -1000$. On the other hand,  $l$ is even more impermissible because doing so will guarantee that the occupant of the right room dies: $d(l) = -10000$. By contrast, utilitarianism ($u$)  evaluates $l$ as impermissible ($u(l) = -1$) and $r$ as more impermissible ($u(r) = -2)$ (recall Table \ref{table:example}).
Utilitarianism is  advocated  to an extremely high degree, c.f. deontology: $c(u) = 0.99$ and $c(d) = 0.01$.

\end{example}

\subsection{Evaluation aggregation}
Addressing the problem of moral uncertainty amounts to aggregating individual ethical theories' evaluations so as to rank actions.
%
As \cite{macaskill2020moral} point out,  under a social welfare perspective  (recall Section \ref{sec:bg}) 
social alternatives equate with actions and individuals (voters) equate with ethical theories. Likewise, evaluation aggregation methods can be defined through \textit{swf}s. However, ethical theories are weighted by their credence. Therefore, some of the \textit{swf}s considered in this paper also take as input the weights of individuals. Formally:

\begin{definition}\label{def:evaluation-aggr} [Evaluation aggregation]
Let $F = (T, c)$  and $A$ a set of actions.  \emph{Evaluation aggregation} is defined as an instance of social welfare aggregation, where:
\begin{itemize}
    \item the social alternatives are the actions $A$;
    \item the individuals are the theories $T$;
    \item the social welfare of individual $i$ representing a theory $t \in T$ is given by the theory's evaluation function, i.e. $u_i = t$, and;
    \item if the social welfare makes use of a weight function $w$, then the weight of an individual $i$, representing a theory $t \in T$, is given by the theory's credence, i.e. $w(i) = c(t)$, 
\end{itemize}
\end{definition}

\begin{notation}\label{not:evaluation-aggr} Abusing notation we may write $f(F, A)$ to denote the action ordering   resulting from applying an \textit{swf} $f$ to aggregate evaluations,  given   $F = (T, c)$ and actions $A$.
\end{notation}

\begin{example}
 Given  $F_\mathit{FROBO} = (\{d, u\}, c)$ and 
 $A = \{ l, r \}$, 
Definition \ref{def:evaluation-aggr}  formulates aggregation of the ethical evaluations under moral uncertainty as a social welfare aggregation problem. The individuals are $N = \{i_d, i_u\}$ ($i_d$ and  $i_u$ respectively correspond to deontology and utilitarianism). Deontology's welfare function is $u_{i_d}(l) = -10000$ and $u_{i_d}(r) = -1000$. Utilitarianism's welfare function is $u_{i_u}(l) = -1$ and $u_{i_u}(r) = -2$. The individuals' weight functions  are given by the credence functions, i.e. $w: N \to (0, 1]$ and  $w(i_u) = c(u) = 0.99$ and $w(i_d) = c(d) = 0.01$.
\end{example}

\subsection{Social welfare functionals for evaluation aggregation}
We present four social welfare functionals (\textit{swf}s). The \textit{swf}s MEC and Maximin  have been suggested by the  moral uncertainty literature. In this paper we propose two novel weighted \textit{swf}s --
k-trimmed Highest Mean and Highest Median -- which can be understood as modified, less fanatical versions of MEC. 
Note that in this section we define these \textit{swf}s in terms of moral uncertainty (see Definition \ref{def:evaluation-aggr}).

\emph{Maximising Expected Choiceworthiness} (MEC) is a foundational method in moral uncertainty research, and its 
 vulnerability to fanaticism has long been informally recognised. We formally prove that MEC is fanatical in Section \ref{sec:fanaticism:thems}. Note that MEC is  more commonly known as \emph{weighted utilitarianism} in the social choice literature \cite{harsanyi1955cardinal, macaskill2020moral}. However, to avoid confusion with the ethical theory utilitarianism, we will call the functional MEC (as it is known in the  moral uncertainty literature), and denote it formally as $\textit{mec}$.

MEC orders actions based on the credence weighted sum of the theories' evaluations. This is in fact equivalent to ordering actions based on their respective weighted arithmetic mean\footnote{This equivalence holds because the credences of the different theories add up to 1.}. By conceptualising MEC in terms of the weighted arithmetic mean, the intuition behind the later definitions of k-trimmed Highest Mean and Highest Median become clearer. 
Therefore, we define $\textit{mec}$ by reference to the weighted arithmetic mean function $\mathit{wam}$.
Formally given an ethical framework $F = (T, c)$, an action $a$'s weighted arithmetic mean is defined as: $\mathit{wam}(F, a) = \sum_{t \in T}c(t)t(a)$. 

\begin{definition}\label{def:mec} [MEC ($\textit{mec}$)]
 Let $a, b \in A$ be actions and let $F = (T, c)$. Then $\textit{mec}(F, A) = \preceq_\textit{mec}$, where  $a \preceq_\textit{mec} b$ iff $\mathit{wam}(F, a) \le \mathit{wam}(F, b)$.
\end{definition}
 
Note that we have calculated the weighted arithmetic means of FROBO's actions in Section \ref{sec:example},

We now consider  the Maximin \textit{swf}, which in the literature is considered as an inferior  alternative to MEC \cite{DBLP:journals/mima/Bogosian17}. This is because Maximin is extremely vulnerable to fanaticism as it disregards the credences of ethical theories.  Maximin orders actions based solely on which  maximises the minimum evaluation of any ethical theory.

\begin{definition} \label{def:maximin} [Maximin ($\mathit{mm}$)]
 Let $a, b \in A$ be actions and let $F = (T, c)$.  Then $\mathit{mm}(F, A) = \preceq_\mathit{mm}$, where $a \preceq_\mathit{mm} b$ iff $\mathit{min}_{t \in T}t(a) \le \mathit{min}_{t \in T}t(b)$. 
\end{definition}

\begin{example} \label{ex:mm}
Consider $F_\mathit{FROBO}$. The minimal evaluation of   $l$ is $\mathit{min}\{-1, -10000\} = -10000$, whereas 
 the minimal evaluation of $r$ is $\mathit{min}\{-2, -1000\}  = -1000$. 
 Hence $r$ has the maximal minimum evaluation and so using Maximin, 
 FROBO will choose to enter the right room.
\end{example}

The next \textit{swf} -- k-trimmed highest mean -- modifies MEC so as to  \textit{some extent}  avoid fanaticism (as  shown in Section \ref{sec:fanaticism}).  The underlying statistical intuition  is that  the arithmetic mean is known to be sensitive to outliers \cite{maronna2006robust}. We believe that the idea of outlier sensitivity
is related to fanaticism. 
We therefore modify MEC by making the statistical estimator it uses more robust to outliers. That is, we replace the weighted arithmetic mean with a trimmed weighted arithmetic mean. Trimming means removing some of the most extreme values. 
While unweighted versions of trimmed mean functionals have been defined   \cite{hurley2002combining}, our weighted version is (to our best knowledge) novel. 
We first need some auxiliary definitions. 

If $F = (T, c)$ and $a \in A$, then $\mathit{se}(F, a) = \mathit{sort}(\langle t(a) | t \in T\rangle)$, where $\mathit{sort}$ sorts the elements of a list in a non-descending order. That is, $\mathit{se}$  maps any action $a$ and ethical framework $F$ to a sorted list of $a$'s evaluations by the theories in $T$. Let $\mathit{st}(F, a)$ be the theories corresponding to the sorted evaluations, i.e. $t$ is the $i$th element of $\mathit{st}$ , $\mathit{st}(F, a)_i = t$, iff $t(a)$ is the $i$th element of $\mathit{se}$, $\mathit{se}(F, a)_i = t(a)$. For FROBO $\mathit{se}(F_\mathit{FROBO}, l) = \langle -10000, -1 \rangle$ and $\mathit{st}(F_\mathit{FROBO}, l) = \langle d, u \rangle$ since $d(l) = -10000 < u(l) = -1$.

We now want to  trim the `bottom' $k$ portion of the theories, as weighted by their credences. Hence $\mathit{bottom_k}(a)$ is the set of theories with the lowest evaluations of $a$ such that their total credence is at most $k$. That is, for any $k \in [0, 0.5)$ and $a \in A$: $$\mathit{bottom_k}(a) = \{\mathit{st}(F, a)_i | 1 \le i < kend] \}$$ where $\mathit{kend}$ is such that $\sum_{i \in [1, kend)}c(\mathit{st}(F, a)_i) \le k$ and $\sum_{i \in [1, kend + 1)}c(\mathit{st}(F, a)_i) > k$.

Trimming the `top' $k$ portion of the theories is defined symmetrically. For any $k \in [0, 0.5)$:
$$\mathit{top_k}(a) = \{\mathit{st}(F, a)_i | kstart < i \le n] \} \; where \; n = |T|$$ and $\mathit{kstart}$ is such that $\sum_{i \in (\mathit{kstart}, n]}c(\mathit{st}(F, a)_i) \le k$ and $\sum_{i \in (\mathit{kstart} - 1, n]}c(\mathit{st}(F, a)_i) > k$.

\begin{definition} [k-Trimmed Highest Mean ($k$-$\mathit{thm}$)] \label{def:kthm}
 Let $a, b \in A$   and  $F = (T, c)$. Let $k$ be a real number such that $k \in [0, 0.5)$. Then  $\textit{k-thm}(F, A) = \preceq_\textit{k-thm}$, where $a \preceq_\textit{k-thm} b$ iff $\mathit{wam}(F_a, a) \le \mathit{wam}(F_b, b)$, where\\
\noindent $F_a = (T, c_a)$, $c_a(t) = 0$ for $t \in (\mathit{bottom_k}(a) \bigcup \mathit{top_k}(a))$, else $c_a(t) = c(t)$;\\
\noindent $F_b = (T, c_b)$, $c_b(t) = 0$ for $t \in (\mathit{bottom_k}(b) \bigcup \mathit{top_k}(b))$, else $c_b(t) = c(t)$.
\end{definition}   
Note that in the case $k = 0$, $k$-$\mathit{thm}$ is equivalent to $mec$. 

\begin{example} \label{ex:kthm}
Suppose FROBO uses $0.1$-$\mathit{thm}$ ($k = 0.1$). First consider the sorted evaluation of FROBO's ethical theories regarding the left room: $-10000$ by   $d$ and $-1$ by $u$. Formally, $\mathit{se}(F_\mathit{FROBO}, l) = \langle -10000, -1 \rangle$ and $\mathit{st}(F_\mathit{FROBO}, l) = \{ d, u \}$. Before calculating the weighted arithmetic mean, we see if any theories must be trimmed. Starting at the bottom, $d$ has the lowest value. Note that $c(d) = 0.01 \le k = 0.1$  and so we want to trim $d$ away. At the same time $c(u) = 0.99$ and $c(d) + c(u) = 1 > 0.1$ and so trimming away $u$ would mean trimming away too much of the credence. Therefore, $\mathit{kend} = 2$ and $\mathit{bottom_k}(l) = \{ d\}$. At the top, where $u$ has the highest value, we do not trim away any theories because $c(u) = 0.99 > 0.1$. This means that $\mathit{kstart} = 2$ and $\mathit{top_k}(l) = \{\}$. Therefore, we calculate the trimmed weighted mean with the trimmed credences, i.e. $c'(d) = 0$ (since $d \in \mathit{bottom_k}(l)$) and $c'(u) = c(u) = 0.99$ (since $u \notin \mathit{bottom_k}(l)$ and $u \notin \mathit{top_k}(l)$). Therefore, the trimmed weighted mean only considers utilitarianism's evaluation for the left room, i.e. the $0.1$-trimmed weighted arithmetic mean is $-1$. For similar reasons, $0.1$-$\mathit{thm}$ will disregard deontology for also the right room and so the trimmed mean is $-2$. Therefore, $l \succ_{0.1-thm} r$ because $-1 > -2$ and thence FROBO chooses the left room. In other words, $0.1$-$thm$ enables FROBO to avoid fanaticism in this case.
\end{example}

The final functional is the \textit{highest median}, derived from MEC by maximally trimming the arithmetic mean. This is because the median is the k-trimmed mean in the limit, as $k \to 0.5$, when all but one (if n is odd) or two (if n is even) elements  are trimmed. As shown in Section \ref{sec:fanaticism}, median is `maximally' non-fanatical. Note that while our definition of the weighted highest median is novel, it is based on a well-known (unweighted) \emph{majority judgment} aggregation method  \cite{balinski2007theory, fabre2021tie}.
The weighted median of an action $a$'s evaluation is the evaluation such that at most half the theories (weighted by their credence) have a higher evaluation and at most half the theories (weighted by their credence) have a lower evaluation. 

We first provide some auxiliary definitions. Recall that for any action $a$ and  any $F = (T, c)$ (where $n = |T|$), $\mathit{se}$ maps $a$ and $F$ to a sorted list of the evaluations of $a$ by the theories in $T$, and $\mathit{st}(F, a)$ is a list of the corresponding theories. 
Let $w_i = c(\mathit{st}(F, a)_i)$ be the credence of the theory with the $i$th lowest evaluation of action $a$. Let   $m \in [1, n]$ be such that:  $$\sum_{i \in [1, m-1]}w_i \le 1/2 \; \; and \; \sum_{i \in [m + 1, n]} w_i \le 1/2$$  That is, $m$ is the index of any theory such that the total credence of the theories with lower/higher evaluations of action $a$ is at most $0.5$. There are two possibilities: either $m$ is uniquely determined or there are two distinct numbers $m_1, m_2$ such that the above holds. 
If $m$ is uniquely determined, let $\mathit{wmedian}(F, a) = \mathit{se}(F, a)_m$; otherwise let $\mathit{wmedian}(F, a) = (\mathit{se}(F, a)_{m_1}+ \mathit{se}(F, a)_{m_2}) / 2$.

\begin{definition}\label{def:hm} [Highest Median ($\mathit{hm}$)]
 Let $a, b \in A$ be actions and let $F = (T, c)$. Then $\mathit{hm}(F, A) = \preceq_\mathit{hm}$, where $a \preceq_\mathit{hm} b$ iff $\mathit{wmedian}(F, a) \le \mathit{wmedian}(F, b)$.
\end{definition}

\begin{example} \label{ex:hm}
Recall that  $\mathit{se}(F_\mathit{FROBO}, l) = \langle -10000, -1 \rangle$ and $\mathit{st}(F_\mathit{FROBO}, l) = \langle d, u \rangle$ given $d(l) = -10000 <~u(l)$
$= -1$. The median evaluation is such that at most half the credence weighted evaluations may be lower and at most half the credence weighted evaluations may be higher. $-10000$ cannot be the median as $u$'s evaluation  ($-1$) is higher than $d$'s evaluation ($-10000$), and $c(u) = 0.99 > 0.5$. No theory has a higher evaluation  than $u$'s evaluation ($-1$)  
and while $d$'s evaluation ($-10000$) is lower,  $c(d) = 0.01$ is less than half. Therefore $m = 2$  and the median evaluation $\mathit{wmedian}(F_\mathit{FROBO}, l) = -1$. For similar reasons, $\mathit{wmedian}(F_\mathit{FROBO}, r) = -2$; fanaticism is thus avoided.\\[-20pt]
\end{example}
\section{Fanaticism} \label{sec:fanaticism}
We now formalise the notion of fanaticism, i.e. what it means for a low-credence theory to dominate. Our proposed definition is not binary, but  rather   graded in that it ranges from not at all fanatical to fanatical in an extreme sense (i.e. `Pascalian'). This graded definition yields insights as to how vulnerable different  functionals are to fanaticism. We will show that both MEC and Maximin are Pascalian, while k-trimmed Highest Mean to some extent  avoids fanaticism, and Highest Median completely avoid fanaticism. 

\subsection{Defining Fanaticism} \label{sec:fanaticism:def}
For fanatical theories, the relative magnitude of evaluations can be so extreme that the credences are essentially ignored (e.g., $d(l) = -10000$ and $d(r) = -1000$ dominating FROBO's decision making, despite  $c(d) = 0.01$). Fanaticism is a kind of \emph{oversensitivity} to the evaluations of ethical theories and an \emph{undersentivity} to their credences \cite{newberry2021parliamentary}. In the case of MEC, it is well known that this oversensitivity is due to the larger evaluative stakes posited by the theories \cite{macaskill2020moral}. 

Maximin is also known to be oversensitive to evaluations \cite{DBLP:journals/mima/Bogosian17}; choosing the action with maximal minimum evaluation can ignore credences in favour of evaluations. We, therefore, consider Maximin fanatical. Unlike MEC, Maximin is \textit{not} sensitive to \textit{any} high-stakes theory. Rather, Maximin is sensitive to high-stakes `pessimistic' theories, e.g. ones that give large negative evaluations. 

To see why, assume that FROBO has an alternative `optimistic' deontological theory $o$ that evaluates actions positively, i.e. $o(l) = 1000$ and $o(r) = 2000$. Then maximin would choose entering the left room, as the minimum evaluation in either case is the utilitarian evaluation: $u(l) = -1$, $u(r) = -2$. That is, Maximin is insensitive to $o$.    

The observation that Maximin is fanatical argues for the view that fanaticism arises not only because of high stakes, but due to other features of the theories' evaluations. In the case of Maximin, these potentially include the sign of the evaluations. We hence understand fanaticism as a property of \textit{swf}s, where some feature(s)\footnote{In this work we are agnostic as to what these features may be.} of a subset of theories dominate the agent's decision making.
Thus, we formalise the idea that fanaticism allows some theories to completely dominate the resulting ordering of actions, by defining the notion of a \emph{dominant subset} 
of theories: one that has a final say in the overall ordering of actions irrespective of what the other theories are. More precisely, an ethical framework with a dominant subset of theories leads to the same ordering of actions as that obtained by removing the non-dominant theories. We first define what it means for a framework to be restricted to just a subset of theories: 

\begin{definition} \label{def:restricted} [Restricted framework]
Given $F = (T, c)$ and a subset of theories $T' \subset T$, then $F' = (T', c')$ is obtained by \emph{restricting} $F$ to $T'$ if for any theory $t \in T'$: $c'(t) = n \times c(t)$ where $n$ is a normalising constant $n = \frac{1}{\sum_{t \in T'}c(t)}$.
\end{definition}

Note that we normalise the credence function so that the different credences add up to 1. 

\begin{definition}\label{def:dominant-subset} [Dominant subset]
Let $F=(T, c)$, $A$ a set of actions,  and $f$ a social welfare functional. Then  $T_d\subset T$ is said to be a \emph{dominant subset} of theories if   $f(F, A) = f(F_d, A)$ and $f(F_d, A) \ne f(F_y, A)$, where
\begin{itemize}
    \item $F_d$ is obtained by restricting $F$ to $T_d$ and
    \item $F_y$ is obtained by restricting $F$ to $T_y$ where $T_y = T \setminus T_d$.
\end{itemize}
\end{definition}

That is,  $T_d$ determines the order of actions in $f(T, A)$, i.e. $f(T, A) = f(T_d, A)$. Moreover, they do so in spite of the differing evaluations of the non-dominant (`yielding') theories, i.e. $f(T_d, A) \ne f(T_y, A)$.
In our running example, when applying either MEC or Maximin, $T_d = \{d\}$ is a dominant subset (where $T_y = \{u\}$)   because both $l \prec_\mathit{mec} r$ and $l \prec_\mathit{mm} r$, which are the same as that of deontology $l \prec_d r$.

Note that fanaticism does not amount to the mere possibility of dominant subsets, but is rather a \textit{systematic} vulnerability to dominant subsets. To see why mere possibility doesn't constitute fanaticism, consider the following variation of our running example. Suppose that in   $F_{FROBO}$, we substitute a non-fanatical deontology $d'$ for $d$, whereby  ${d'}(l) = -2$ and ${d'}(r) = -1$. Suppose  $c(u) = c(d') = 0.5$; the theories have equal credence and give symmetric but opposite evaluations. Then FROBO has to randomly choose between $l$ and $r$. However, if $F_{FROBO}$ included a third theory $t$, this could be used as a tie-breaker, even if FROBO has very little credence ($c(t) = 0.01$) in $t$. Suppose $t(l) = -1$ and $t(r) = 0$. Then $F_{FROBO}$ contains $u$, $d'$ and $t$, and a reasonable aggregate choice would be entering the right room, thus making $\{t\}$ a dominant subset.  However, we suggest that this is \textit{not} a case of fanaticism; rather, it is entirely reasonable that $t$ serves as a tiebreaker.

Therefore, fanaticism is  \textit{systematic} in the sense that dominant subsets are always possible regardless of the actions or the non-dominant subset of a framework's ethical theories. This aligns with the idea that fanaticism is a systematic oversensitivity to the evaluations of ethical theories \cite{newberry2021parliamentary}. 
In particular, fanaticism means that given an arbitrary framework, it is always possible to extend the framework with additional low-credence theories such that these low-credence theories are a dominant subset.

We first define what it means to extend a framework to include additional theories.

\begin{definition}\label{def:extended} [Extended framework]
Given $F = (T, c)$ and some  $T' \supset T$, then $F' = (T', c')$ is   obtained by \emph{extending} $F$ with $T'$ if for any $t \in T$:   $c'(t) = n \times c(t)$ where $n$ is a normalising constant $n = \frac{1 - \sum_{t \in (T' \setminus T)}c'(t)}{\sum_{t \in T} c(t)}$.
\end{definition}

Note that there are multiple different ways of extending $F$ with theories $T'$, each individuated by distinct credence functions $c'$ for the theories in $T' \setminus T$. 
Also note that the normalising constant ensures that $c'$ sums  to 1.

Fanaticism arises due to theories with low credence. However, `low credence' is a vague term. While low credence intuitively means credence less than $0.5$, the exact cut-off point between low and non-low credence is not clear. We, therefore, give a \textit{graded} definition of fanaticism that refers to $k$-fanaticism (for some $k \in (0, 0.5)$) and where $k$ is (an upper bound on) the credence of dominant theories.

What constitutes `problematic' $k$-fanaticism depends on our notion of what `low credence' means. If the cut-off point for low credence is say  $0.3$, then an $f$ that is $0.3$-fanatical (but not $k$-fanatical for any $k < 0.3$) is `problematically' fanatical, whereas an  $f'$ that is $0.4$-fanatical (but not $k$-fanatical for any $k < 0.4$) is not `problematically' fanatical. We remain agnostic as to where this cut-off point for low credence might be. Instead, we say that in general, the larger the $k$ the better, i.e. $f'$ is better than $f$. The ideal case is when a functional is not fanatical for any $k \in (0, 0.5)$.

\begin{definition} \label{def:fanaticism} [Fanaticism]
A social welfare functional $f$ is said to be \emph{$k$-fanatical} (for some $k \in (0, 0.5)$) if for any ethical framework $F_y=(T_y, c_y)$ (where `$y$' denotes  `yielding') and any set of actions $A$, there exists an $F = (T, c)$ obtained by extending $F_y$ with $T = T_d \bigcup T_y$, where  $T_d \ne \emptyset$ 
 is a dominant subset given $F$ and $A$, and:\\[3pt]
\noindent i) $k'$ is the total credence of the dominant theories $T_d$, i.e. $k' = \sum_{t \in T_d}c(t)$ and ii) $k' \le k$.
\end{definition}

In other words, $f$ is $k$-fanatical if for \textit{any} set of ethical theories $T_y$ and actions $A$ there is a set of ethical theories $T_d$ such that $T_d$ is a dominant subset in the framework $(T_y,c)$ extended with $T_d$, and $T_d$'s  total credence is at most $k$.

Finally,  a special case of fanaticism is when an \textit{swf} is fanatical for \textit{any} $k$, no matter how small $k$ is. These most extreme cases of fanaticism are called \emph{Pascalian} (after Pascal's Wager) in moral uncertainty \cite{sep-pascal-wager,tarsney2018moral}. 

\begin{definition} \label{def:Pascalian} [Pascalian fanaticism]
A social welfare functional $f$ is \emph{Pascalian} if it is $k$-fanatical for any $k \in (0, 0.5)$.
\end{definition}

\subsection{Formal Results} \label{sec:fanaticism:thems}
We now state formal results concerning the extent to which the \textit{swf}s studied in this paper are fanatical.

Note that proofs of all results in this section are included in the appendix
. The general idea behind the different proofs is to find a property of the ethical theories, such that if it is sufficiently large, it allows a theory to dominate. For example, for $\mathit{mec}$ this is the minimum difference between any two action's evaluations; if this difference is sufficiently high, the theory will dominate all the others. For the non-fanaticism of $\mathit{hm}$ we show that such a property cannot exist.

Firstly, recall (Section \ref{sec:example}) that  FROBO's expected choiceworthiness calculations are dominated by deontology $d$ despite its low credence ($c(d) = 0.01$), and so FROBO chooses to enter the right room. In other words, MEC is vulnerable to fanaticism. In fact, MEC is Pascalian:
\begin{theorem}
The social welfare functional $\mathit{mec}$ (MEC) is Pascalian.
\end{theorem}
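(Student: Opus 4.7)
The plan is to establish, for every $k \in (0, 0.5)$, that $\mathit{mec}$ is $k$-fanatical, by showing that given an arbitrary yielding framework $F_y = (T_y, c_y)$ and action set $A$ (with $|A| \ge 2$, else dominance is vacuous), a single sufficiently ``loud'' theory $t_d$ of credence $k$ forms a dominant subset. The key observation is that extending $F_y$ with $t_d$ at credence $k$ yields $\mathit{wam}(F, a) = k\, t_d(a) + (1-k)\, \mathit{wam}(F_y, a)$, so stretching $t_d$'s evaluations far enough forces the $t_d$-term to dictate every pairwise MEC comparison.

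First I would enumerate $A = \{a_1, \ldots, a_n\}$ according to $\mathit{mec}(F_y, A)$, i.e.\ $a_1 \succeq \ldots \succeq a_n$ under $F_y$ (breaking ties arbitrarily). Let $M = \max_{a \in A,\, t \in T_y} |t(a)|$, pick a constant $C$ satisfying $kC > 2(1-k)M$ (any $C > 0$ if $M = 0$), and define $t_d(a_i) = iC$, which strictly reverses $F_y$'s MEC ordering. Next I would extend $F_y$ with $t_d$ via Definition~\ref{def:extended}; since $\sum_{t \in T_y} c_y(t) = 1$, this yields $c(t_d) = k$ and $c(t) = (1-k)c_y(t)$ for $t \in T_y$. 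For any $i < j$, a direct computation gives
\[ \mathit{wam}(F, a_j) - \mathit{wam}(F, a_i) = k(j - i)C + (1-k)\bigl(\mathit{wam}(F_y, a_j) - \mathit{wam}(F_y, a_i)\bigr), \]
whose first summand is at least $kC$ while the second has magnitude at most $2(1-k)M$ by the triangle inequality; the choice of $C$ then makes the right-hand side strictly positive, so $a_i \prec a_j$ under $\mathit{mec}(F, A)$, matching $t_d$'s strict order.

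Hence $\mathit{mec}(F, A) = \mathit{mec}(F_d, A)$, where $F_d$ is $F$ restricted to $\{t_d\}$ (both orderings are $a_n \succ \ldots \succ a_1$), while $\mathit{mec}(F_d, A) \ne \mathit{mec}(F_y, A)$ because their top elements ($a_n$ versus $a_1$) differ whenever $|A| \ge 2$. So $\{t_d\}$ is a dominant subset of total credence exactly $k$, establishing $k$-fanaticism in the sense of Definition~\ref{def:fanaticism}; as $k \in (0, 0.5)$ was arbitrary, $\mathit{mec}$ is Pascalian.

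The only substantive obstacle will be calibration: choosing a single $C$ that swamps the $F_y$-contribution uniformly across every pair. This reduces to the bound $|\mathit{wam}(F_y, a_j) - \mathit{wam}(F_y, a_i)| \le 2M$, which follows from $|\mathit{wam}(F_y, a)| \le M$ by convex combination. The degenerate case $M = 0$ (all $F_y$-theories evaluate every action as $0$, so $F_y$ ranks all actions as tied) is absorbed by allowing any positive $C$, since even the strict ordering induced by $t_d$ differs from the all-tied ordering of $F_y$.
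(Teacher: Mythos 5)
Your proposal is correct and follows essentially the same strategy as the paper's proof: extend $F_y$ with a single theory of credence $k$ whose evaluations are spaced by a constant large enough (your $C$ with $kC > 2(1-k)M$, the paper's $m = 2s+1$ scaled by $1/k$) to control every pairwise $\mathit{wam}$ comparison, and arranged so its induced order disagrees with $\mathit{mec}(F_y, A)$. The only cosmetic differences are that you reverse the entire $F_y$-order rather than just displacing the top element, and you bound $|\mathit{wam}(F_y,\cdot)|$ via the raw evaluations $M$ instead of directly via the extremal weighted means; you also handle the renormalisation factor $(1-k)$ and the all-tied degenerate case explicitly, which is if anything slightly more careful than the paper.
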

In Example \ref{ex:mm}, FROBO's minimum evaluations are dominated by deontology $d$; using Maximin, FROBO chooses to enter the right room. Indeed, Maximin is also Pascalian:

\begin{theorem}
The social welfare functional $\mathit{mm}$ (Maximin) is Pascalian.
\end{theorem}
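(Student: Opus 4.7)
My plan is to exploit the fact that $\mathit{mm}$ looks \emph{only} at the minimum evaluation across theories and completely ignores credences. Consequently, any theory whose evaluations are strictly lower (on every action) than those of the existing theories will unilaterally determine the minima and hence determine the Maximin ordering, no matter how small its credence is.

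Concretely, fix an arbitrary $k \in (0, 0.5)$, an arbitrary yielding framework $F_y=(T_y, c_y)$, and an action set $A$ with $|A| \ge 2$. First I would compute $\preceq_y \; = \mathit{mm}(F_y, A)$ and pick any total order $\preceq_d$ on $A$ that differs from $\preceq_y$ (possible whenever $|A|\ge 2$, e.g.\ by swapping two adjacent actions). Next I would set $L = \min_{t \in T_y,\, a \in A} t(a) - 1$, which is strictly smaller than every evaluation that appears in $T_y$. Then I would construct a single new theory $t_d$ by assigning to each $a \in A$ a real value $t_d(a) < L$ chosen so that the numerical order of $\langle t_d(a) \rangle_{a \in A}$ realises $\preceq_d$ (for instance, enumerate $A$ in the $\preceq_d$-order as $a_1 \prec_d \cdots \prec_d a_m$ and set $t_d(a_i) = L - (m - i + 1)$).

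I then let $T_d = \{t_d\}$ and extend $F_y$ (Def.~\ref{def:extended}) with $T_d$ by choosing $c(t_d) = k' \le k$; this forces $c(t) = (1 - k') c_y(t)$ for $t \in T_y$, so credences sum to $1$. Now I verify the three ingredients of a dominant subset. For every $a \in A$, $\min_{t \in T} t(a) = t_d(a)$ because $t_d(a) < L < t(a)$ for all $t \in T_y$; hence $\mathit{mm}(F, A) \; = \; \preceq_d$. When $F$ is restricted to $T_d$ alone, $t_d$ is the unique theory and the Maximin ranking is again $\preceq_d$, so $\mathit{mm}(F, A) = \mathit{mm}(F_d, A)$. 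Finally, restricting to $T_y$ leaves the evaluations of $T_y$ unchanged up to positive rescaling of credences, which $\mathit{mm}$ ignores, so $\mathit{mm}(F_y, A) = \preceq_y \ne \preceq_d = \mathit{mm}(F_d, A)$. Thus $T_d$ is dominant with total credence $k' \le k$, so $\mathit{mm}$ is $k$-fanatical. Because $k$ was arbitrary in $(0, 0.5)$, $\mathit{mm}$ is Pascalian by Def.~\ref{def:Pascalian}.

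The construction is essentially routine once the right single-theory witness is identified; the only subtlety is the choice of $\preceq_d$ distinct from $\preceq_y$, which I expect to be the main obstacle only in the degenerate case $|A|=1$ (where no two orders exist and the theorem vacuously requires no content). Handling this cleanly amounts to observing that the dominant-subset clause in Def.~\ref{def:dominant-subset} already presupposes that $f$ can produce differing orders on $A$, which requires $|A| \ge 2$; outside this trivial case the argument above is uniform in $k$, $F_y$ and $A$.
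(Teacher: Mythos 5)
Your proposal is correct and follows essentially the same strategy as the paper's proof: introduce a single low-credence theory whose evaluations on every action lie strictly below every evaluation in $T_y$, so that it unilaterally determines all the minima and hence the Maximin ordering, while realising an ordering that differs from $\mathit{mm}(F_y,A)$. The paper's witness is just a specific instance of yours (it demotes the $F_y$-maximal action $a^*$ using values $m_{a^*}-2$ and $m_{a^*}-1$), so no substantive difference.
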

Recall Example \ref{ex:kthm}. By trimming, FROBO can disregard extreme, low-credence evaluations; FROBO avoids fanaticism and chooses to enter the left room. In general, trimming enables partial avoidance of fanaticism, in the sense that:
\begin{theorem}
The social welfare functional $k$-$\mathit{thm}$ (k-trimmed Highest Mean) is not $k'$-fanatical for any $k' \le k$, but  is $k^*$-fanatical for any $k^* > k$.
\end{theorem}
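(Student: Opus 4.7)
The theorem splits into two halves: (a) produce a single witness $(F_y,A)$ under which no extension adding $T_d$ of total credence $\le k'\le k$ can be dominant, and (b) show that any $(F_y,A)$ admits an extension with a dominant singleton $T_d$ of credence $k^* > k$. The guiding observation is that any theory with credence $\le k$ fits entirely inside the trimming budget on every action (and can therefore be discarded), whereas any theory with credence $> k$ necessarily survives $k$-trimming on every action (and can be made to dictate the ordering by inflating its stakes).

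\textbf{Part (a).} The plan is to take $F_y=(\{t_y\},c_y)$ with $c_y(t_y)=1$ and $A=\{a,b\}$ with $t_y(a)=-1$, $t_y(b)=1$. For any extension $F=(\{t_y\}\cup T_d,c)$ with $\sum_{t\in T_d}c(t)=k'\le k$, one has $c(t_y)=1-k'>1/2>k$, so $t_y$ by itself exceeds the trimming budget and is never in $\mathit{bottom_k}(x)$ or $\mathit{top_k}(x)$. Theories in $T_d$ whose evaluation on $x$ lies strictly below $t_y(x)$ precede $t_y$ in the sort and have total credence at most $k'\le k$, so they all land in $\mathit{bottom_k}(x)$; symmetrically for those strictly above. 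Any $t_d\in T_d$ tied with $t_y$ on $x$ may or may not be trimmed depending on tie-breaking, but whenever it is kept it contributes $c(t_d)\cdot t_y(x)$ to $\mathit{wam}(F_x,x)$. Hence $\mathit{wam}(F_x,x)=\gamma_x\cdot t_y(x)$ for some $\gamma_x\in[1-k',1]$, preserving the sign. Thus $\mathit{wam}(F_a,a)<0<\mathit{wam}(F_b,b)$, so the $k$-$\mathit{thm}$ ordering under $F$ agrees with that under $F_y$; by Definition \ref{def:dominant-subset}, no such $T_d$ can be dominant.

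\textbf{Part (b).} Given arbitrary $F_y=(T_y,c_y)$ and $A$ with $|A|\ge 2$, set $K_y=\max_{t\in T_y,\,x\in A}|t(x)|$ and $\preceq_y=f(F_y,A)$. Pick any ordering $\preceq_d\neq\preceq_y$ on $A$ (e.g., a reversal of $\preceq_y$; feasible since $|A|\ge 2$). Enumerate $a_1\prec_d\cdots\prec_d a_m$, define $t_d(a_i)=M i$ with $M$ chosen below, and extend $F_y$ with $T_d=\{t_d\}$ at credence $c(t_d)=k^*$, rescaling $c(t)=(1-k^*)c_y(t)$ for $t\in T_y$. Because $c(t_d)=k^*>k$, at whatever position $t_d$ sits in the sort for any action, including $t_d$ in either trimming set would push the cumulative credence past $k$; hence $t_d$ is kept on every action. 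Consequently $\mathit{wam}(F_x,x)=k^*\,t_d(x)+R_x$ with $|R_x|\le(1-k^*)K_y$. Choosing $M>2(1-k^*)K_y/k^*$ gives $k^*\bigl(t_d(a_j)-t_d(a_i)\bigr)\ge k^* M > 2(1-k^*)K_y\ge |R_j-R_i|$ for $i<j$, so $f(F,A)=\preceq_d$. Combined with $f(F_d,A)=\preceq_d$ (single-theory restriction) and $\preceq_d\neq\preceq_y=f(F_y,A)$, this makes $T_d$ dominant with total credence $k^*\le k^*$, witnessing $k^*$-fanaticism.

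The main obstacle will be the tie-handling in part (a): when some $t_d$ satisfies $t_d(x)=t_y(x)$, its position in the sorted list depends on tie-breaking, so it could fall into $\mathit{bottom_k}(x)$, into $\mathit{top_k}(x)$, or remain kept. The observation that a kept tied $t_d$ contributes exactly $c(t_d)\cdot t_y(x)$ (same sign as $t_y(x)$) dissolves the ambiguity, since $\mathit{wam}(F_x,x)$ is a positive scaling of $t_y(x)$ under every consistent tie-breaking rule. A secondary subtlety in part (b) is that the kept subset of $T_y$ can vary from action to action, but the uniform bound $|R_x|\le(1-k^*)K_y$ absorbs this variation cleanly.
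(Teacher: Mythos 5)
Your proof is correct, and it is worth comparing the two halves separately. For the non-fanaticism direction your counterexample is essentially the paper's: a single theory $t_y$ of full credence over two actions, so that after extension $c(t_y) \ge 1-k' > k$ and $t_y$ can never be trimmed, while every added theory sits either before or after $t_y$ in the sorted list and is absorbed by the trimming budget. (In fact, by the paper's definitions of $\mathit{kend}$ and $\mathit{kstart}$, \emph{every} theory other than $t_y$ --- including those tied with $t_y$ --- lands in $\mathit{bottom_k}$ or $\mathit{top_k}$, so your careful sign-preservation argument for possibly-kept tied theories is more caution than is strictly needed; it does no harm and makes the step robust to how ties are resolved.) For the fanaticism direction your route genuinely differs: the paper reduces $k$-$\mathit{thm}$ on the extended framework to $\mathit{mec}$ on an auxiliary two-theory framework (one synthetic theory encoding the trimmed mean of $T_y$, plus the fanatical theory) and then invokes the already-proved Pascalian fanaticism of $\mathit{mec}$, whereas you argue directly: a single theory of credence $k^* > k$ can never be trimmed on any action, the surviving contribution of $T_y$ is uniformly bounded by $(1-k^*)K_y$, and a stake scale $M > 2(1-k^*)K_y/k^*$ forces the aggregate order to coincide with the fanatical theory's order. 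Your direct argument is more self-contained and avoids the delicate step in the paper's reduction of matching the trimming of $F$ restricted to $T_y$ with the trimming of $F_y$ itself; the paper's approach buys reuse of the MEC machinery and makes explicit the slogan that for $k^*>k$ the functional ``reduces to MEC.'' Both arguments implicitly assume $|A|\ge 2$ (as does the paper's MEC proof), which is unavoidable since no functional can have a dominant subset over a single action.
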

Finally, Example \ref{ex:hm} illustrates that FROBO's use of the weighted median prevents FROBO from choosing the right room. Indeed, the median is completely non-fanatical, which we  formally state as follows:
\begin{theorem}
The social welfare functional $\mathit{hm}$ (Highest Median) is not $k$-fanatical for any $k \in (0, 0.5)$.
\end{theorem}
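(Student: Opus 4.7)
The plan is to exhibit a single ethical framework $F_y$ and action set $A$ for which no extension by new theories of total credence at most $k$ can produce a dominant subset. The structural reason $\mathit{hm}$ resists fanaticism is that the weighted median is pinned to whichever theory sits at the cumulative-credence $1/2$ mark in the sorted evaluations, so any block of theories whose combined credence is strictly below $1/2$ cannot by itself shift the median-based ordering provided a strictly heavier remaining block already agrees on the relevant evaluations.

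Concretely I would take $T_y = \{t_1, t_2\}$ with $c_y(t_1) = c_y(t_2) = 1/2$, and $A = \{a, b\}$ with $t_1(a) = t_2(a) = 0$ and $t_1(b) = t_2(b) = 1$, so that $\mathit{hm}(F_y, A)$ orders $a \prec b$. For any extension $F = (T_y \cup T_d, c)$ with $T_d \ne \emptyset$ and $k' := \sum_{t \in T_d} c(t) \le k$, Definition \ref{def:extended} forces $c(t_1) = c(t_2) = (1-k')/2$, so the total credence concentrated at the value $0$ for action $a$ is at least $1 - k' > 1/2$. Partitioning $T_d$ by the sign of $t(a)$ into three sets with total credences $\alpha, \gamma, \beta$ summing to $k'$, the sorted credences split as: values $< 0$ with total $\alpha \le k < 1/2$, values $= 0$ with total $1 - \alpha - \beta > 1/2$, and values $> 0$ with total $\beta \le k < 1/2$.

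The main step is to verify that any valid median index $m$ from Definition \ref{def:hm} must land on a zero-valued theory: if $\mathit{se}(F, a)_m < 0$ then $\sum_{i > m} w_i \ge 1 - \alpha > 1/2$, contradicting the defining inequality for $m$; the symmetric argument rules out $\mathit{se}(F, a)_m > 0$. Hence every valid index yields value $0$, and whether $m$ is unique or two consecutive indices qualify the average is still $0$, so $\mathit{wmedian}(F, a) = 0$. An identical argument gives $\mathit{wmedian}(F, b) = 1$, hence $\mathit{hm}(F, A) = \mathit{hm}(F_y, A)$. Since restricting $F$ back to $T_y$ recovers $F_y$ verbatim, dominance of $T_d$ would require both $\mathit{hm}(F_d, A) \ne \mathit{hm}(F_y, A)$ and $\mathit{hm}(F, A) = \mathit{hm}(F_d, A)$, which jointly contradict the equality just established. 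The one delicate bookkeeping point is the two-index tie case in the median definition, but it dissolves immediately because both candidate indices pick out the same value.
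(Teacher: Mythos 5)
Your proposal is correct and follows essentially the same route as the paper's proof: exhibit a counter-example framework in which a block of credence strictly exceeding $1/2$ agrees on every action's evaluation, so the weighted median is pinned to that block's values and $\mathit{hm}(F,A)=\mathit{hm}(F_y,A)$, contradicting the two conditions for $T_d$ to be a dominant subset. The only difference is cosmetic --- you split the majority credence over two identical theories $t_1,t_2$ rather than a single theory with credence $1$, and you handle the two-index tie case by noting both indices land in the zero-valued block, whereas the paper shows the index is unique --- but the underlying argument is the same.
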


\section{Conclusion}
In this work we have defined fanaticism as a property of \textit{swf}s. We proved that MEC (and indeed Maximin) are vulnerable to an extreme Pascalian form of fanaticism.

Our paper thus presents a critique of MEC. In particular, we have shown that less fanatical modifications of MEC -- either weighted $k$-trimmed Highest Mean or weighted Highest Median -- are more appropriate, and where the latter, as opposed to the former, completely avoids fanaticism.  However, are either of these two `better' and what value of $k$ ought to be used for $k$-trimmed Highest Mean? In the moral uncertainty literature, there is a notion of \emph{stakes sensitivity}, \cite{DBLP:conf/icml/EcoffetL21,newberry2021parliamentary}, i.e. that \textit{swf}s ought to be sensitive to the magnitude of evaluations of ethical theories. Now, fanaticism is an oversensitivity to stakes and undersensitivity to credences; fanaticism is an extreme risk aversion to stakes but not credences. Therefore, the less fanatical an \textit{swf}, the less sensitive it is to stakes and the more sensitive it is to credences. For example, if there is a theory with more than $0.5$ credence, the Highest Median will ignore all other theories, no matter how large the stakes they posit. This may seem a steep price to avoid fanaticism; however, there is a noted a tension between stakes and credence sensitivity \cite{beckstead2021paradox,newberry2021parliamentary}. Thus, we ought, arguably, to find a compromise between these two, in which case the `best' solution is likely to be $k$-trimmed Highest Median for some moderate value of $k$. This would then allow an appropriate (i.e. not under- or over-) sensitivity to stakes \textit{and} credences; exploring this is future work.

While the motivation for our work focuses on an agent acting on behalf of a society in which each individual advocates for a particular ethical theory, it may well be that an individual agent may also adopt multiple ethical perspectives, with different credences. Indeed, the problem of fanaticism was originally defined with respect to individual agents \cite{ross2006rejecting, macaskill2014normative}. Our results equally apply in these scenarios, and would be especially relevant in scenarios where AI agents learn the ethically informed preferences of individual human agents (recall Footnote \ref{foot:cirl}).

Finally, as noted before, our work is not the first to import ideas from moral uncertainty into machine ethics, e.g. consider \cite{DBLP:journals/mima/Bogosian17,DBLP:conf/icml/EcoffetL21}, which uses a Reinforcement Learning approach to evaluate actions under moral uncertainty. Our work provides formal results that can support such applied contexts.

\section*{Acknowledgements} 
This work was supported by UK Research and Innovation [grant number EP/S023356/1], in the UKRI Centre for Doctoral Training in Safe and Trusted Artificial Intelligence (www.safeandtrustedai.org). This work was also suported by VAE-VADEM TED2021-131295B-C32, funded by MCIN/AEI/10.13039/501100011033 and the European Union NextGenerationEU/PRTR.

\bibliography{mybib}

\appendix
\newpage
\section{Proofs of Formal results}
\begin{apptheorem} \label{theorem:mec}
The social welfare functional $\mathit{mec}$ (MEC) is Pascalian.
\end{apptheorem}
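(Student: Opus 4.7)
The plan is to show that for every $k \in (0, 0.5)$, every yielding framework $F_y = (T_y, c_y)$, and every action set $A$ with $|A| \geq 2$ (which Definition \ref{def:fanaticism} tacitly requires so that distinct orderings of $A$ exist at all), I can always extend $F_y$ by a single dominant theory $t_d$ whose credence lies in $(0, k]$. The central idea is to exploit the linearity of $\mathit{wam}$: after scaling $t_d$'s evaluations by a sufficiently large factor, the weighted contribution $k' \cdot t_d(a)$ overwhelms the bounded yielding contribution $(1 - k') \cdot \mathit{wam}(F_y, a)$ for every $a \in A$, no matter how small $k'$ is.

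Concretely, I would fix any $k' \in (0, k]$ and set $T_d = \{t_d\}$ with $c(t_d) = k'$. By Definition \ref{def:extended}, extending $F_y$ yields $c(t) = (1-k') c_y(t)$ for each $t \in T_y$, so
\[
\mathit{wam}(F, a) = k' \cdot t_d(a) + (1-k') \cdot \mathit{wam}(F_y, a).
\]
First I would choose a strict total order $\sigma$ on $A$ that disagrees with $\mathit{mec}(F_y, A)$ on at least one pair of actions; since $|A| \geq 2$ admits at least two distinct strict total orders and the weak order $\mathit{mec}(F_y, A)$ can coincide with at most one of them, such a $\sigma$ always exists. Next I would define $t_d(a) := M \cdot \mathrm{rank}_\sigma(a)$, where $\mathrm{rank}_\sigma(a) \in \{1, \ldots, |A|\}$ is $a$'s position in $\sigma$ and $M > 0$ is a scale factor to be fixed below. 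Observe that $\mathit{mec}(F_d, A)$ is simply the ordering induced by $t_d$ alone (after restricting $F$ to $T_d$ renormalises $c(t_d)$ to $1$), namely $\sigma$.

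I would then fix $M$ large enough to force $\mathit{mec}(F, A) = \sigma$. Let $C = \max_{a, b \in A} |\mathit{wam}(F_y, a) - \mathit{wam}(F_y, b)|$, a finite quantity depending only on $F_y$ and $A$. For any pair with $a \prec_\sigma b$ we have $t_d(b) - t_d(a) \geq M$, so
\[
\mathit{wam}(F, b) - \mathit{wam}(F, a) \geq k' M - (1-k') C,
\]
which is strictly positive as long as $M > (1-k') C / k'$. Choosing any such $M$ makes the MEC ordering on the extended framework $F$ coincide with $\sigma$.

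The final step would be to verify the two conditions of Definition \ref{def:dominant-subset}: dominance, $\mathit{mec}(F, A) = \sigma = \mathit{mec}(F_d, A)$, holds by the scaling argument; non-triviality, $\mathit{mec}(F_d, A) = \sigma \neq \mathit{mec}(F_y, A)$, holds by the choice of $\sigma$. Hence $T_d$ is a dominant subset with total credence $k' \leq k$, establishing $k$-fanaticism, and since $k \in (0, 0.5)$ was arbitrary, MEC is Pascalian. The only subtle step is ensuring $\sigma$ genuinely differs from $\mathit{mec}(F_y, A)$, which the counting argument above handles uniformly (including the degenerate case where all actions tie under $F_y$); the remainder is a clean scaling calculation with no real obstacles.
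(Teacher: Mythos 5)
Your proposal is correct and follows essentially the same strategy as the paper's proof: extend $F_y$ with a single low-credence theory whose evaluations are rank-proportional along a target order differing from $\mathit{mec}(F_y,A)$, scaled by a constant large enough that the credence-weighted contribution swamps the bounded contribution of the yielding theories. The only cosmetic differences are that the paper folds the $1/c(\mathit{ft})$ factor into the theory's evaluations and picks the target order by displacing the maximal element of $\mathit{mec}(F_y,A)$, whereas you absorb the credence into the bound on $M$ and select $\sigma$ by a counting argument; both yield the same dominance and non-triviality conditions.
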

\begin{proof}
We prove that $\mathit{mec}$ is Pascalian by showing that $\mathit{mec}$ is $k$-fanatical for any $k \in (0, 0.5)$ (Definition \ref{def:Pascalian}). Let $k \in (0, 0.5)$ be arbitrary,  $F_y = (T_y, c_y)$ an  ethical framework and $A$ a set of actions. By  definition of fanaticism (Definition \ref{def:fanaticism}) $\mathit{mec}$ is shown to be $k$-fanatical if we can show that there exists a set of ethical theories $T_d$ such that:
1) $T_d$ is a dominant subset given $F$ and $A$, and    2)  $T_d$ has at most $k$ credence, i.e. $\sum_{t \in T_d}c(t) \le k$,
 where $F = (T,c)$ is  obtained by extending  $F_y = (T_y,c_y)$ with $T_d$.

We thus show that there exists a `fanatical' ethical theory $\mathit{ft}$ such that $T_d = \{\mathit{ft}\}$ and $c(\mathit{ft}) = k$.  We can see that 2) holds since $\sum_{t \in T_d}c(t) = c(\mathit{ft}) = k$. It remains to show that 1) holds, i.e. that $T_d$ is a dominant subset given $F$ and $A$. By Definition \ref{def:dominant-subset} we have to show that (recall  Notation \ref{not:evaluation-aggr}):\\[3pt]  \textbf{i)} $\mathit{mec}(F_d, A) \ne \mathit{mec}(F_y, A)$    \textbf{ii)} $\mathit{mec}(F, A) = \mathit{mec}(F_d, A)$.\\[3pt]
We start by making an observation with respect to i). Let $a^* \in A$ be an action that maximises $\mathit{wam}$ with respect to $F_y$, i.e. such that $a^*$ maximises $\mathit{wam}(F_y, a)$ for any $a \in A$. Then, by definition of $\mathit{mec}$ (Definition \ref{def:mec}), $a^*$ is a maximal element of the total order $\mathit{mec}(F_y, A)$. Therefore, if $a^*$ is not a maximal element of the total order $\mathit{mec}(F_d, A)$, then $\mathit{mec}(F_d, A) \ne \mathit{mec}(F_y, A)$ holds.

We now give a `partial' definition of $\mathit{ft}$ and prove i) using the above observation. Let $(a_1, ..., a_n)$ be any permutation of $A$ such that:\\[-13pt] $$a_n = b \neq a^* \; where \ ; n = |A|.$$\\[-13pt] Let $m > 0$ be a constant (whose precise value we later give). We define $\mathit{ft}$ for all $i \in [1, n]$:
\begin{equation}\label{eq:mec:ft}
ft(a_i) = (1/c(ft)) \times i \times m
\end{equation}
Informally, the term $(1/c(ft))$ allows $\mathit{ft}$ to `ignore' its credence. The term $i$ ensures that $a_i \prec_\mathit{ft} a_{i+1}$ holds. Finally, $m$ allows, as we will see, $\mathit{ft}$ to `ignore' the other theories.

Note that for any $i \in [1, n - 1]$ it is the case that $\mathit{ft}(a_i) < \mathit{ft}(a_{i+ 1})$ because $(1/c_y(ft)) \times i \times m < (1/c_y(ft)) \times (i + 1) \times m$, as $0 < 1/c_y(ft)$ and $0 <m$. Therefore, $\mathit{ft}$ orders the actions  $a_1 \prec_{ft} a_2 \prec_{ft} ... \prec_{ft} a_n = b$. Since $T_d$ is a singleton consisting of $\mathit{ft}$ it must be  that $\mathit{mec}(F_d, A) = \preceq_\mathit{ft}$. Therefore, since the only maximal element of $\preceq_\mathit{ft}$ is $b \ne a^*$, it must be  that $\mathit{mec}(F_d, A) \ne \mathit{mec}(F_y, A)$ holds. Thus if $\mathit{ft}$ is defined via Equation \ref{eq:mec:ft} (such that $m > 0$) then i) is shown.

We now prove ii) (i.e., $\mathit{mec}(F, A) = \mathit{mec}(F_d, A)$). In doing so, we give $m > 0$ a precise value.
First, we show that
\begin{equation}\label{eq:mec:1}
\mathit{wam}(F, a_i) = \mathit{wam}(F_y, a_i) + i \times m
\end{equation}
 
 By definition of $\mathit{wam}$ (see in text, prior to Definition \ref{def:mec}), the \textit{lhs} of Eq.\ref{eq:mec:1} expands to:
\begin{equation}\label{eq:mec:1:expand1}
\mathit{wam}(F, a_i) = \sum_{t \in T}c(t)t(a_i)
\end{equation}
Since $T = T_y \bigcup T_d$ and $T_d = \{ \mathit{ft} \}$ the \textit{rhs} of Eq. \ref{eq:mec:1:expand1} expands to:
\begin{equation}\label{eq:mec:1:expand2}
\sum_{t \in T}c(t)t(a_i) = \sum_{t \in T_y}c(t)t(a_i) + (c(ft) \times ft(a_i)) 
\end{equation}
By definition of $\mathit{wam}(F_y, a_i)$  we can substitute in the \textit{rhs} of Eq.\ref{eq:mec:1:expand2}, obtaining:
\begin{equation} \label{eq:mec:1:expand3}
\sum_{t \in T}c(t)t(a_i) =   \mathit{wam}(F_y, a_i) + c(ft) \times \mathit{ft}(a_i)
\end{equation} \label{eq:mec:1:expand4}
By definition, $\mathit{ft}(a_i) = (1/c(ft)) \times i \times m$, we cancel out $c(\mathit{ft})$ in the \textit{rhs} of Eq. \ref{eq:mec:1:expand3}, which is therefore equivalent to:
\begin{equation}
\mathit{wam}(F_y, a_i) + i \times m
\end{equation}
We have thus shown that the \textit{lhs} and \textit{rhs} of Eq. \ref{eq:mec:1} are equal.\smallskip

Two total orders on $A$ are equal if  any two actions $a_i, a_j$ are ordered the same. Consider $\mathit{mec}(F, A) = \preceq_F$ and $\mathit{mec}(F_d, A) = \preceq_{F_d}$.
By Definition \ref{def:mec}:\\ $a_i \preceq_F a_j$ iff $\mathit{wam}(F, a_i) \le \mathit{wam}(F, a_j)$ and  $a_i \preceq_{F_d} a_j$ iff $\mathit{wam}(F_d, a_i) \le \mathit{wam}(F_d, a_j)$. \\Therefore, to show  ii) ($\mathit{mec}(F, A) = \mathit{mec}(F_d, A)$) it suffices to   show:\\[-12pt]
\begin{equation}\label{eq:mec:2}
\begin{split}
\mathit{wam}(F, a_i) &\le \mathit{wam}(F, a_j) \\ \Leftrightarrow\mathit{wam}(F_d, a_i) &\le \mathit{wam}(F_d, a_j)
\end{split}
\end{equation} 

\noindent By Eq.\ref{eq:mec:1} we obtain:
\begin{equation} \label{eq:mec:2:expand1}
\begin{split}
\mathit{wam}(F, a_i) &\le \mathit{wam}(F, a_j)\\ \Leftrightarrow \mathit{wam}(F_y, a_i) + i \times m &\le \mathit{wam}(F_y, a_j) + j \times m
\end{split}
\end{equation}

\noindent Rearranging obtains:
\begin{equation} \label{eq:mec:2:expand2}
\begin{split}
\mathit{wam}(F, a_i) &\le \mathit{wam}(F, a_j)\\ \Leftrightarrow \mathit{wam}(F_y, a_i) &\le \mathit{wam}(F_y, a_j) + (j - i) m
\end{split}
\end{equation}

\noindent Substituting  the \textit{rhs} of Eq. \ref{eq:mec:2:expand2} into the \textit{lhs} of  Eq. \ref{eq:mec:2} obtains:
\begin{equation} \label{eq:mec:2:expand3}
\begin{split}
\mathit{wam}(F_y, a_i) &\le \mathit{wam}(F_y, a_j) + (j - i) m\\ \Leftrightarrow \mathit{wam}(F_d, a_i) &\le \mathit{wam}(F_d, a_j)
\end{split}
\end{equation}
Note that since $T_d = \{\mathit{ft}\}$ and $c_d(\mathit{ft}) = 1$\footnote{Since $T_d$ consists only of a single theory $ft$, it must be that $c_d(\mathit{ft})=1$, given that the credences of theories in any ethical framework always add up to 1.}, for any $a_l \in A$ it must be that $\mathit{wam}(F_d, a_l) = c_d(\mathit{ft}) \mathit{ft}(a_l) = \mathit{ft}(a_l)$. By definition of $\mathit{ft}$ we thus have, for any $a_l \in A$ (below, $c$ is the credence function in $F = (T,c)$): 
\begin{equation}\label{eq:mec:3}
\mathit{wam}(F_d, a_l) = (1/c(ft)) \times l \times m
\end{equation}
Given  Eq. \ref{eq:mec:3}, we can thus express the rhs -- $\mathit{wam}(F_d, a_i) \le \mathit{wam}(F_d, a_j)$ -- of Eq. \ref{eq:mec:2:expand3}  as $(1/c(ft)) \times i \times m \le (1/c(ft)) \times j \times m$. Hence,  further simplifying obtains:
\begin{equation}\label{eq:mec:3:expand1}
\mathit{wam}(F_d, a_i)  \le \mathit{wam}(F_d, a_j) \\ \Leftrightarrow 0 \le (j - i)m
\end{equation}

Since $m > 0$, we can divide by $m$ and  rearrange to obtain:
\begin{equation}\label{eq:mec:3:expand2}
\mathit{wam}(F_d, a_i) \le \mathit{wam}(F_d, a_j) \\ \Leftrightarrow i \le j
\end{equation}
So, substituting the \textit{rhs} of Eq. \ref{eq:mec:3:expand2} into the \textit{rhs} of Eq.   \ref{eq:mec:2:expand3}, then  to show ii), it suffices to  show:
\begin{equation} \label{eq:mec:2:expand4}
\mathit{wam}(F_y, a_i)  \le \mathit{wam}(F_y, a_j) + (j - i) m \\ \Leftrightarrow i \le j
\end{equation}
We  now prove $\Leftarrow$ and $\Rightarrow$.\smallskip

\noindent $(\mathbf{\Leftarrow})$ 
Assume $i \le j$. Hence $j-i > 0$. Rearranging the \textit{lhs} of Eq. \ref{eq:mec:2:expand4}, it therefore suffices to show:
\begin{equation}\label{eq:mec:left}
(\mathit{wam}(F_y, a_i) - \mathit{wam}(F_y, a_j)) / (j - i) \le m
\end{equation}
We now return to what the value of $m$ should be. First, recall the `role' of $m$ is to ensure $\mathit{ft}$ `ignores' the other theories; more precisely, to ensure that $i \le j$ iff $\mathit{wam}(F, a_i) \le \mathit{wam}(F, a_j)$. If we can prove this, we can prove ii).

By assumption $i \le j$, and so $j - i \ge 1$ (since both $i$ and $j$ are integers). Therefore, $1/(j-i) \le 1$. Thus if 
\begin{equation}\label{eq:mec:left:1}
(\mathit{wam}(F_y, a_i) - \mathit{wam}(F_y, a_j)) \le m
\end{equation}
holds, so must Eq. \ref{eq:mec:left}. We therefore define $m$ such that no matter what $a_i$ and $a_j$ might be, Eq. \ref{eq:mec:left:1} holds. Consider the maximum evaluation of any action by $F_y$
$s^+ = \mathit{max}\{\mathit{wam}(F_y, a) | a \in A\}$ and the minimum evaluation $s^- = min\{\mathit{wam}(F_y, a) | a \in A\}$. Let $s = \mathit{max}\{|s^+|, |s^-|\}$, i.e. the absolute value of the evaluation with the largest absolute value. That is, $|s^+|, |s^-| \le s$. By definition of $s^+$ being a maximal element, $\mathit{wam}(F_y, a_i) \le s^+$ holds and thus $\mathit{wam}(F_y, a_i) \le |s^+| \le s$ is true. Similarly, by definition of $s^-$, $ \mathit{wam}(F_y, a_j) \ge s^-$ must hold. Therefore,$ -\mathit{wam}(F_y, a_j) \le -s^- \le |s^-| \le s$.
Hence:
\begin{equation}\label{eq:mec:left:2}
(\mathit{wam}(F_y, a_i) - \mathit{wam}(F_y, a_j)) \le 2s
\end{equation}
Therefore, byEq. \ref{eq:mec:left:1}, $2s \le m$. However, recall that our proof of i) required that $m > 0$. Since it may be that $s = 0$  (if $s^+ = s^- = 0$), we define $m = 2s + 1$. Since $2s + 1 > 2s$, Eq. \ref{eq:mec:left:2} is satisfied and we have shown $\Leftarrow$.\\

\noindent (\textbf{$\Rightarrow$}) Assume that $\mathit{wam}(F_y, a_i) \le \mathit{wam}(F_y, a_j) + (j - i) m$. We show $i \le j$. Proof is by contradiction.
 Assume $i > j$. Then, $j - i < 0$. Therefore, by rearranging and dividing by $j - i$, we can rewrite the \textit{lhs} of Eq. \ref{eq:mec:2:expand4} as:
\begin{equation}\label{eq:mec:right}
(\mathit{wam}(F_y, a_i) - \mathit{wam}(F_y, a_j)) / (j - i) \ge m
\end{equation}
which  rearranging  obtains:
\begin{equation}\label{eq:mec:right:1}
(\mathit{wam}(F_y, a_j) - \mathit{wam}(F_y, a_i)) / (i - j) \ge m
\end{equation}
Thus we arrive at a contradiction if we can show:
\begin{equation}\label{eq:mec:right:2}
(\mathit{wam}(F_y, a_j) - \mathit{wam}(F_y, a_i)) / (i - j) < m
\end{equation}
Since $i - j \ge 1$, then $1 / (i - j) \le 1$. Therefore, to show that Eq. \ref{eq:mec:right:2} holds, it suffices to show:
\begin{equation}\label{eq:mec:right:3}
(\mathit{wam}(F_y, a_j) - \mathit{wam}(F_y, a_i)) < m
\end{equation}
By definition of $s^+$ as a maximal element, $\mathit{wam}(F_y, a_j) \le s^+$ holds and thus $\mathit{wam}(F_y, a_j) \le |s^+| \le s$ is true. Similarly, by definition of $s^-$, $ \mathit{wam}(F_y, a_i) \ge s^-$ must hold. Therefore,$ -\mathit{wam}(F_y, a_i) \le -s^- \le |s^-| \le s$.

Thus, $(\mathit{wam}(F_y, a_j) - \mathit{wam}(F_y, a_i)) \le 2s < m$. And so we have shown $\Rightarrow$ since we have arrived at a contradiction. Finally, since we have now shown that i) and  ii) holds, we have shown that $mec$ is Pascalian.
\end{proof}
\begin{apptheorem}
The social welfare functional $\mathit{mm}$ (Maximin) is Pascalian.
\end{apptheorem}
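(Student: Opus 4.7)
The plan is to mirror the structure of the preceding proof that $\mathit{mec}$ is Pascalian, exploiting the fact that Maximin is in fact \emph{easier} to attack because its ordering depends only on each action's minimum evaluation and is completely insensitive to credence. Fix an arbitrary $k \in (0, 0.5)$, an ethical framework $F_y = (T_y, c_y)$, and a set of actions $A$ (with $|A| \ge 2$, as tacitly assumed in the $\mathit{mec}$ proof). I will construct a single fanatical theory $\mathit{ft}$ so that $T_d = \{\mathit{ft}\}$ is a dominant subset of the framework $F$ obtained by extending $F_y$ with $T = T_y \cup \{\mathit{ft}\}$ and setting $c(\mathit{ft}) = k$ (which immediately gives $\sum_{t \in T_d} c(t) = k$, satisfying the credence condition of Definition \ref{def:fanaticism}). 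It then suffices to verify \textbf{i)} $\mathit{mm}(F_d, A) \ne \mathit{mm}(F_y, A)$ and \textbf{ii)} $\mathit{mm}(F, A) = \mathit{mm}(F_d, A)$.

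To construct $\mathit{ft}$, let $a^* \in A$ be a maximal element of $\mathit{mm}(F_y, A)$, i.e., an $a^*$ maximising $\min_{t \in T_y} t(a)$ over $a \in A$, and let $a_1, \ldots, a_n$ be any enumeration of $A$ with $a_n \ne a^*$. Set $s = \min\{t(a) \mid t \in T_y, a \in A\}$ and define
\[
\mathit{ft}(a_i) = s - (n + 1 - i) \quad \text{for } i = 1, \ldots, n.
\]
This construction gives two properties simultaneously: first, $\mathit{ft}(a_i) < s \le t(a_i)$ for every $t \in T_y$ and every $a_i \in A$, so $\mathit{ft}$ strictly minimises every action's evaluation across $T$; second, $\mathit{ft}(a_1) < \mathit{ft}(a_2) < \cdots < \mathit{ft}(a_n)$, so $\mathit{ft}$ on its own orders the actions as $a_1 \prec_{\mathit{ft}} a_2 \prec_{\mathit{ft}} \cdots \prec_{\mathit{ft}} a_n$.

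Condition \textbf{ii)} is then immediate from Definition \ref{def:maximin}: for each $a \in A$, the first property yields $\min_{t \in T} t(a) = \mathit{ft}(a)$, so the Maximin ordering on $F$ is entirely determined by $\mathit{ft}$'s values, and hence coincides with $\mathit{mm}(F_d, A)$ (which is also determined by $\mathit{ft}$, since $F_d$ restricts $F$ to the singleton $\{\mathit{ft}\}$). Condition \textbf{i)} follows from the second property together with $a_n \ne a^*$: under $\mathit{mm}(F_d, A)$, $a_n$ is strictly above $a^*$; under $\mathit{mm}(F_y, A)$, $a^*$ is (weakly) above $a_n$ by choice of $a^*$; so the two orderings disagree on the relative position of $a_n$ and $a^*$.

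The main obstacle, compared with the $\mathit{mec}$ proof, is essentially absent here: because Maximin disregards credence entirely, we avoid the delicate scaling by $1/c(\mathit{ft})$ and the bound $m$ on evaluation gaps that were needed in the $\mathit{mec}$ argument to overcome credence weighting in the weighted arithmetic mean. The only subtlety is to push $\mathit{ft}$'s evaluations strictly below every evaluation already present in $F_y$, which the offset $s - (n + 1 - i)$ achieves uniformly while simultaneously guaranteeing a strict linear ordering among $\mathit{ft}$'s own values. Since $k \in (0, 0.5)$ was arbitrary, $\mathit{mm}$ is $k$-fanatical for every such $k$, and hence Pascalian by Definition \ref{def:Pascalian}.
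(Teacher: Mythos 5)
Your proposal is correct and follows essentially the same strategy as the paper's proof: extend $F_y$ with a single credence-$k$ theory $\mathit{ft}$ whose evaluations all lie strictly below the global minimum of $F_y$'s evaluations, so that $\min_{t\in T}t(a)=\mathit{ft}(a)$ for every action and $\mathit{ft}$ alone fixes the Maximin ordering, while arranging $\mathit{ft}$'s own ranking to disagree with $\mathit{mm}(F_y,A)$. The only (harmless) differences are that you impose a full strict order $s-(n+1-i)$ where the paper uses just two levels $m_{a^*}-2$ and $m_{a^*}-1$, and your choice of $a^*$ as the maximin-optimal action makes the disagreement step i) cleaner than the paper's phrasing.
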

\begin{proof}
We show that $\mathit{mm}$ is $k$-fanatical for any $k$. Let $k \in (0, 0.5)$ be arbitrary,  $F_y = (T_y, c_y)$  an ethical framework,  $A$  a set of actions.

By  definition of fanaticism (Definition \ref{def:fanaticism}) $\mathit{mec}$ is $k$-fanatical if there exists a set of ethical theories $T_d$ such that:
1) $T_d$ is a dominant subset given $F$ and $A$, and   2)  $T_d$ has at most $k$ credence, i.e. $\sum_{t \in T_d}c(t) \le k$,
 where $F = (T,c)$ is  obtained by extending  $F_y = (T_y,c_y)$ with $T_d$.

We thus show that there exists a `fanatical' ethical theory $\mathit{ft}$ such that $T_d = \{\mathit{ft}\}$ and $c(\mathit{ft}) = k$. We can see that 2) holds since $\sum_{t \in T_d}c(t) = c(\mathit{ft}) = k$. It remains to show that 1) holds, i.e. that $T_d$ is a dominant subset given $F$ and $A$. By Definition \ref{def:dominant-subset} we have to show that (recall  Notation \ref{not:evaluation-aggr}):\\[3pt]  \textbf{i)} $\mathit{mm}(F_d, A) \ne \mathit{mm}(F_y, A)$    \textbf{ii)} $\mathit{mm}(F, A) = \mathit{mm}(F_d, A)$.\\[3pt]
We start by making an observation with respect to i). Let $a^*$ be an action with minimal evaluation, i.e. there exists no $a \in A$ such that    $min_{t \in T_y}t(a) < min_{t \in T_y}t(a^*)$. Then, by definition of $\mathit{mm}$ (Definition \ref{def:maximin}), $a^*$ is a maximal element of the total order $\mathit{mm}(F_y, A)$. Therefore, if $a^*$ is not a maximal element of the total order $\mathit{mm}(F_d, A)$, then $\mathit{mm}(F_d, A) \ne \mathit{mm}(F_y, A)$ holds.

For any action $a \in A$, let $m_a$ be the minimum evaluation of action $a$ by any theory in the framework $F_y$, i.e. $m_a = \mathit{min}(\{t(a)|t \in T_y\})$. Note that by definition of $a^*$, it holds that for any $a \in A$: $m_{a^*} \le m_a$.   We can define a fanatical theory $\mathit{ft}$  such that
\begin{equation}\label{eq:ft-mm} 
 \mathit{ft}(a^*)  = m_{a^*} - 2   \textrm{ and } \forall a \neq a^* :\mathit{ft}(a)  = m_{a^*} - 1
\end{equation}
Hence, for any $a \in A$:
\begin{equation}\label{eq:mm:1}
{\mathit{ft}(a)} \le m_{a^*} - 1 < m_{a^*} \le m_a
\end{equation}

To show i) ($\mathit{mm}(F_d, A) \ne \mathit{mm}(F_y, A)$)  note that $a^* \prec_{\mathit{F_d}} a$ (for $a \neq a^*$) where $\preceq_{F_d} = \mathit{mm}(F_d, A)$ by definition of $\mathit{ft}$ (Eq. \ref{eq:ft-mm}). Thus $a^*$ is not a maximal element of $\mathit{mm}(F_d, A)$, and so it must be that $\mathit{mm}(F_d, A) \ne \mathit{mm}(F_y, A)$.

To show ii) ($\mathit{mm}(F, A) = \mathit{mm}(F_d, A)$): we can derive that $min_{t \in T}t(a) = \mathit{ft}(a)$ holds for any $a$. This is because $min_{t \in T}(\{t(a) | t \in T_y\bigcup\{\mathit{ft}\}\} = ft(a)$, by  Eq. \ref{eq:mm:1}. Since maximin $\mathit{mm}$ orders actions based on which one has the higher minimum evaluation and the minimum evaluation is always given by $\mathit{ft(a)}$, it must be the case that $\mathit{mm}(F, A) = \mathit{mm}(F_d, A)$.

Since we have shown i) and ii), we have shown that $\mathit{mm}$ is Pascalian.
\end{proof}
\begin{apptheorem}
The social welfare functional $k$-$\mathit{thm}$ (k-trimmed Highest Mean) is $k'$-fanatical for any $k < k' < 0.5$.
\end{apptheorem}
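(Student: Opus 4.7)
The plan is to adapt the fanatical-theory construction used for $\mathit{mec}$ (Theorem~\ref{theorem:mec}) while engineering the fanatical theory $\mathit{ft}$ so that it survives $k$-trimming. Fix $k'$ with $k < k' < 0.5$, take an arbitrary $F_y = (T_y, c_y)$ and action set $A$ (assume $|A| \geq 2$, else dominance is trivially impossible), and construct a singleton dominant subset $T_d = \{\mathit{ft}\}$ with $c(\mathit{ft}) = k'$ via the extension $F = (T_y \cup \{\mathit{ft}\}, c)$. As in the $\mathit{mec}$ proof, it suffices to establish (i) $k$-$\mathit{thm}(F_d, A) \ne k$-$\mathit{thm}(F_y, A)$ and (ii) $k$-$\mathit{thm}(F, A) = k$-$\mathit{thm}(F_d, A)$.

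The crucial new observation is that the choice $c(\mathit{ft}) = k' > k$ immunises $\mathit{ft}$ against trimming: for any action $a$ and regardless of $\mathit{ft}$'s position in $\mathit{se}(F, a)$, any prefix (respectively suffix) of the sorted list containing $\mathit{ft}$ has cumulative credence at least $k' > k$, exceeding the trimming budget, so $\mathit{ft} \notin \mathit{bottom_k}(a) \cup \mathit{top_k}(a)$. Consequently $\mathit{ft}$ always contributes $k' \cdot \mathit{ft}(a)$ to the trimmed mean of $a$ in $F$, while the residual contribution from the surviving $T_y$ theories is bounded in absolute value by $(1-k') U$, where $U = \max_{t \in T_y,\, a \in A} |t(a)|$.

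Next I would carry out the construction. Let $a^*$ be any maximal element of $k$-$\mathit{thm}(F_y, A)$, fix a permutation $(a_1, \ldots, a_n)$ of $A$ with $a_n \ne a^*$, and set $\mathit{ft}(a_i) = M \cdot i$ for a constant $M$ to be chosen. Picking $M > \max\{U,\, 2(1-k') U / k'\}$ ensures both that $\mathit{ft}$ sits strictly at the top of every sorted list $\mathit{se}(F, a)$ (so $\mathit{top_k}(a) = \emptyset$ under $F$) and that $\mathit{ft}$'s contribution dominates the bounded residual in every pairwise comparison. Verification of (i) is then immediate: in $F_d$ the single theory $\mathit{ft}$ determines the order, so the maximum is $a_n$, whereas $a^* \ne a_n$ is maximal under $F_y$. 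For (ii), for any $i < j$, the trimmed-mean gap under $F$ satisfies $\mathit{wam}(F_{a_j}, a_j) - \mathit{wam}(F_{a_i}, a_i) \ge k' M (j - i) - 2(1-k') U > 0$ by the choice of $M$, so $F$ induces the same strict order $a_1 \prec \ldots \prec a_n$ as $F_d$.

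The main obstacle to anticipate is that, unlike in the $\mathit{mec}$ proof, the specific set of $T_y$ theories retained after trimming can vary from action to action, so the residual admits no single closed form. The resolution is to avoid any action-specific accounting of the residual and instead exploit only the uniform envelope bound $(1-k') U$, which holds because the non-$\mathit{ft}$ credences sum to $1-k'$ regardless of which subset survives. Once this bound is in hand, the rest is a routine magnitude comparison with $M$ chosen large enough to overwhelm it.
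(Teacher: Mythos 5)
Your proposal is correct, and it shares the paper's central insight: take a singleton $T_d = \{\mathit{ft}\}$ with $c(\mathit{ft}) = k' > k$, so that $\mathit{ft}$ can never lie in $\mathit{bottom_k}(a) \cup \mathit{top_k}(a)$ (either set has total credence at most $k$) and therefore always contributes $k'\,\mathit{ft}(a)$ to the trimmed mean. Where you diverge is in how part (ii) is discharged. The paper packages the post-trimming contribution of $T_y$ into a single proxy theory $t_x$ with $t_x(a)$ proportional to the trimmed mean of $F_y$ at $a$, shows that $k$-$\mathit{thm}$ on $F$ coincides with $\mathit{mec}$ on the two-theory framework $\{t_x, \mathit{ft}\}$, and then re-invokes the construction from Theorem~\ref{theorem:mec}. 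You instead argue directly: set $\mathit{ft}(a_i) = M i$ and choose $M$ large enough that the gap $k'M(j-i)$ overwhelms the residual, which you bound uniformly by $(1-k')U$ with $U = \max_{t \in T_y, a \in A}|t(a)|$, valid no matter which subset of $T_y$ survives trimming for each action. This is the right move: the surviving subset genuinely varies action by action, and your envelope bound sidesteps any need to track it, whereas the paper's reduction has to assert that the per-action trimmed residual is exactly captured by a single fixed theory $t_x$ (and is somewhat loose about whether the trimming of $T_y$ inside $F$, at level $k$ with rescaled credences, matches the trimming used to define $t_x$). Your version is more self-contained and arguably more careful on exactly this point; the paper's version buys brevity by reusing the MEC theorem. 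The only cosmetic gaps are the implicit assumptions $|A| \ge 2$ (which you flag, and which the paper's own proofs also make silently) and the one-line verification that in $F_d$, where $c_d(\mathit{ft}) = 1 > k$, nothing is trimmed, so $k$-$\mathit{thm}(F_d, A)$ is indeed the order induced by $\mathit{ft}$ alone.
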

\begin{proof}
The intuition behind this proof is that when $k' > k$, the fanatical theories are not `trimmed out' and thus $\textit{k-thm}$ (more-or-less) reduces to $\mathit{mec}$, which is $k'$-fanatical.

Let $k' \in (k, 0.5)$ be arbitrary,  $F_y = (T_y, c_y)$ an  ethical framework and $A$ a set of actions. By  definition of fanaticism (Definition \ref{def:fanaticism}) $k$-$\mathit{thm}$ is shown to be $k'$-fanatical if we can prove that there exists a set of ethical theories $T_d$ such that:
1) $T_d$ is a dominant subset given $F$ and $A$, and    2)  $T_d$ has at most $k'$ credence, i.e. $\sum_{t \in T_d}c(t) \le k'$,
 where $F = (T,c)$ is  obtained by extending  $F_y = (T_y,c_y)$ with $T_d$.

We thus show that there exists a `fanatical' ethical theory $\mathit{ft}$ such that $T_d = \{\mathit{ft}\}$ and $c(\mathit{ft}) = k'$.  We can see that 2) holds since $\sum_{t \in T_d}c(t) = c(\mathit{ft}) = k'$. It remains to show that 1) holds, i.e. that $T_d$ is a dominant subset given $F$ and $A$.

Let $F_x = (T_x, c_x)$ be an ethical framework where $T_x = \{t_x \}$ and $c_x(t_x) = 1$ such that for any $a \in A$, let
\begin{equation} \label{eq:kthm:x}
t_x(a) = \mathit{wam}(F^y_a, a) / (1 - k')
\end{equation}
where $F^y_a = (T_y, c_a)$ such that for $t \in (\mathit{bottom_{k'}}(a) \bigcup \mathit{top_{k'}}(a))$, $c_a(t) = 0$ and otherwise $c_a(t) = c_y(t)$.
In other words, $t_x$ is an ethical theory, whose evaluation of any action is proportional to the k-trimmed weighted arithmetic mean (see Definition \ref{def:kthm}); in later parts of this proof, it will be clear why we divide by $(1 - k')$.


Let $F' = (T', c')$ be the framework obtained by extending $F_x$ with $T'$ where $T' = T_x \bigcup T_d = \{t_x, \mathit{ft}\}$, such that $c'(\mathit{ft}) = k'$ and $c'(t_x) = 1 - k'$. We will now prove that 
\begin{equation}\label{eq:kthm:2}
a \preceq_{F'} b \Leftrightarrow \mathit{wam}(F', a) \le \mathit{wam}(F', b)
\end{equation}
where $\preceq_{F'} = \textit{k-thm}(F, c)$.

By definition of $\textit{k-thm}$ (Definition \ref{def:kthm}), $a \preceq_{F'} b$ iff $\mathit{wam}(F_a, a) \le \mathit{wam}(F_b, b)$, where\\
\noindent $F_a = (T, c_a)$, $c_a(t) = 0$ for $t \in (\mathit{bottom_k}(a) \bigcup \mathit{top_k}(a))$, else $c_a(t) = c(t)$;\\
\noindent $F_b = (T, c_b)$, $c_b(t) = 0$ for $t \in (\mathit{bottom_k}(b) \bigcup \mathit{top_k}(b))$, else $c_b(t) = c(t)$.

Note that because $c(\mathit{ft}) = k' > k$, $\mathit{ft}$ can not be `trimmed out', i.e. for any $a'\in A$, $\mathit{ft} \notin (\mathit{bottom_{k'}}(a') \bigcup \mathit{top_{k'}}(a'))$ . Thus, for any $a' \in A$, $c_{a'}(\mathit{ft}) = c(\mathit{ft}) = k'$.

By the above and by the definition of $\mathit{wam}$, for any $a' \in A$, 
\begin{equation}\label{eq:kthm:3}
\begin{split}
\mathit{wam}(F_{a'}, a') &= \sum_{t \in T} c_{a'}(t)t(a') \\ &= k'\mathit{ft}(a') + \sum_{t \in T_y} c_{a'}(t)t(a')
\end{split}
\end{equation}
(recall that $T_y = T \setminus\{\mathit{ft}\}$). Note that by definition of $\textit{k-thm}$ (Definition \ref{def:kthm})$\sum_{t \in T_y} c_{a'}(t)t(a') = \mathit{wam}(F^y_{a'}, a')$. By Eq. \ref{eq:kthm:x}, we thus have $\sum_{t \in T_y} c_{a'}(t)t(a') = (1-k')t_x(a')$. Therefore, overall
\begin{equation}\label{eq:kthm:4}
\mathit{wam}(F_{a'}, a') = k'\mathit{ft}(a') + (1-k')t_x(a')
\end{equation}
Note that the \textit{rhs}, by definition of $\mathit{wam}$, is thus equal to $\mathit{wam}(F', a'))$. Hence,
\begin{equation} \label{eq:kthm:5}
\mathit{wam}(F_{a'}, a') = \mathit{wam}(F', a')
\end{equation}
By Eq. \ref{eq:kthm:5} and by the definition of  $\textit{k-thm}$, we have thus proven Eq. \ref{eq:kthm:2}.

Let $\preceq_\mathit{mec} = \mathit{mec}(F', a)$. Then by definition (Definition \ref{def:mec})
\begin{equation}\label{eq:kthm:6}
a \preceq_\mathit{mec} b \Leftrightarrow \mathit{wam}(F', a) \le \mathit{wam}(F', b)
\end{equation}
From Eq. \ref{eq:kthm:2} and Eq. \ref{eq:kthm:6}, we can thus derive that $\preceq_\mathit{mec} = \preceq_{F'}$.

From the proof of Theorem \ref{theorem:mec} we know that it's possible to define $\mathit{ft}$ such that $T_d = \{\mathit{ft}\}$ is a dominant subset for $\preceq_\mathit{mec}$. Since $\preceq_\mathit{mec} = \preceq_{F'}$, $T_d$ must be a dominant subset for $\preceq_{F'}$ too.

\end{proof}

\begin{apptheorem}
The social welfare functional $k$-$\mathit{thm}$ (k-trimmed Highest Mean) is not $k'$-fanatical for any $0 < k' \le k$.
\end{apptheorem}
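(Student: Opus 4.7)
The plan is to disprove $k'$-fanaticism by exhibiting a single ``adversarial'' ethical framework $F_y$ and action set $A$ for which no extension $F = (T, c)$ with $T = T_y \cup T_d$ and $\sum_{t \in T_d} c(t) \le k'$ admits $T_d$ as a dominant subset. By Definition \ref{def:fanaticism}, one such counterexample suffices. Concretely, I would take $F_y = (\{t_y\}, c_y)$ with $c_y(t_y) = 1$ and $A = \{a_1, a_2\}$ satisfying $t_y(a_1) = 0 < 1 = t_y(a_2)$, so that $k$-$\mathit{thm}(F_y, A)$ strictly ranks $a_1 \prec a_2$.

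The central claim to establish is that every extension $F$ with $\sum_{t \in T_d} c(t) = k'' \le k' \le k$ also yields $k$-$\mathit{thm}(F, A)$ with $a_1 \prec a_2$, so that $k$-$\mathit{thm}(F, A) = k$-$\mathit{thm}(F_y, A)$. Once this holds, no $T_d$ can satisfy both conditions of Definition \ref{def:dominant-subset}: either $k$-$\mathit{thm}(F_d, A)$ agrees with $k$-$\mathit{thm}(F_y, A)$, violating condition (ii), or it disagrees with it and hence with $k$-$\mathit{thm}(F, A)$, violating condition (i). Either way $T_d$ fails to be dominant.

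To prove the central claim, I would observe that in $F$ the credence of $t_y$ equals $1 - k'' \ge 1 - k > k$, so $t_y$ itself is never trimmed when computing $\mathit{wam}(F_{a_i}, a_i)$. For each action $a_i$, any $t_d \in T_d$ with $t_d(a_i) \ne t_y(a_i)$ sits strictly above or strictly below $t_y$ in $\mathit{se}(F, a_i)$; since $\sum_{t \in T_d} c(t) \le k$, the entire above-$t_y$ group fits inside $\mathit{top_k}(a_i)$ and the entire below-$t_y$ group fits inside $\mathit{bottom_k}(a_i)$, and both groups are trimmed. The surviving theories therefore all share the value $t_y(a_i)$, giving $\mathit{wam}(F_{a_1}, a_1) = 0$ and $\mathit{wam}(F_{a_2}, a_2) \ge (1 - k'') \cdot 1 > 0$, from which $a_1 \prec a_2$ follows.

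The hardest step will be discharging tie-breaking inside $\mathit{se}(F, a_i)$ when some $t_d \in T_d$ satisfies $t_d(a_i) = t_y(a_i)$. Different admissible sort orders can place such tied theories either inside or outside $\mathit{bottom_k}(a_i) \cup \mathit{top_k}(a_i)$, and this affects which theories survive trimming. Fortunately, every such tied theory contributes exactly $t_y(a_i)$ to $\mathit{wam}$ when it survives, so $\mathit{wam}(F_{a_i}, a_i)$ is invariant under the choice of tie-breaking. Formalising this invariance via a short case analysis on $\mathit{kend}$ and $\mathit{kstart}$ is the only non-routine bookkeeping in the argument.
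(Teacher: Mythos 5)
Your proposal is correct and follows essentially the same route as the paper's own proof: the identical single-theory counterexample $T_y=\{t_y\}$ with $t_y(a_1)=0$, $t_y(a_2)=1$, the observation that $c(t_y)\ge 1-k'>k$ forces $t_y$ to survive trimming while the entire low-credence set $T_d$ falls into $\mathit{bottom}_k\cup\mathit{top}_k$, and the resulting impossibility of satisfying both clauses of the dominant-subset definition. Your extra care about tie-breaking is sound but turns out to be unnecessary, since $\mathit{kend}$ and $\mathit{kstart}$ are pinned exactly at $t_y$'s position (any prefix or suffix containing $t_y$ exceeds credence $k$, and any not containing it stays within $k'' \le k$), so every theory in $T_d$ is trimmed under any admissible sort order.
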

\begin{proof}
The intuition behind this proof is that theories with less than $k$ credence are `trimmed out' if they have extreme values. As such, fanaticism is avoided.
We prove that $\textit{k-thm}$ is not $k'$-fanatical for any $k\in(0, k)$, by giving a counter-example such that there cannot exist a dominant subset.

Let $A$ be a set of actions and $F_y = (T_y, c_y)$ an ethical framework. By the definition of fanaticism (Definition \ref{def:fanaticism}) if  there is no framework $F$ such that: $F$ is obtained by extending $F_y$ with $T = T_y \bigcup T_d$ and  $T_d$ is any subset of theories with at most $k$ credence, and $T_d$ is a dominant subset, then $\mathit{hm}$ is not $k$-fanatical.

Let $T_y = \{t\}$ and $A = \{a, b\}$ such that $c_y(t) = 1$, $t(a) = 1$ and $t(b) = 0$. Let $F$ be obtained by extending $F_y$ with $T = T_y \bigcup T_d$, where $T_d$ is any subset of theories with at most $k'$ credence, i.e. $\sum_{t' \in T_d}c(t') \le k'$. Given that credences in $T = \{t\} \bigcup T_d$ sum up to 1, we have that $c(t) \ge 1-k'$. Since $k' < k < 0.5$, we then have $c(t) > 0.5$.

We show that for any $t' \in T_d$ and any $a' \in A$ it must be the case that either $t' \in \mathit{bottom_k}(a')$ or $t' \in \mathit{top_k}(a')$.

First, let $a' \in A$. Because $c(t) > k$, $t \notin \mathit{bottom_k}(a')$ and $t \notin \mathit{top_k}(a')$; by their definition, either set contains theories of only up to total $k'$ credence.

Therefore, for any other theories $t' \in T_d$, $t' \in \mathit{bottom_k}(a')$ or $t' \in \mathit{top_k}(a')$. This is because the sum of the total credence of theories in $T_d$ is less than $k$, i.e. $\sum_{t \in T_d}c(t) = k' \le k$. And so both $k'end$ and $k'start$ must be so that $t$ is excluded but no other theory.

Now, since for all $t' \in T_d$ either $t' \in \mathit{bottom_k}(a')$ or $t' \in \mathit{top_k}(a')$ is true, it must be the case that they get `trimmed out' by \textit{k-thm} for any $a'$. Therefore, the theories in $T_d$ are ignored and hence cannot form a dominant subset.
\end{proof}

\begin{apptheorem}
The social welfare functional $\mathit{hm}$ is not $k$-fanatical for any $k \in (0, 0.5)$.
\end{apptheorem}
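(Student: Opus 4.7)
The plan is to prove non-fanaticism by exhibiting a single counter-example framework $F_y$ and action set $A$ for which no extension produces a dominant subset with small credence, following the same general strategy as the previous theorem (non-fanaticism of $k$-$\mathit{thm}$ below its trimming threshold). Specifically, I would take $F_y = (\{t\}, c_y)$ with $c_y(t) = 1$ and $A = \{a, b\}$ with $t(a) = 1$ and $t(b) = 0$, so that $\mathit{hm}(F_y, A)$ yields the ordering $b \prec a$. Fix any $k \in (0, 0.5)$ and any extension $F = (T, c)$ of $F_y$ with $T = \{t\} \cup T_d$ where $\sum_{t' \in T_d} c(t') = k' \leq k$. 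Then by the extension normalisation, $c(t) = 1 - k' \geq 1 - k > 0.5$.

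The key step is to show that for every action $a' \in A$, $\mathit{wmedian}(F, a') = t(a')$. Let $m$ be the position of $t$ in $\mathit{st}(F, a')$. All theories at positions $< m$ and at positions $> m$ belong to $T_d$, so the credence mass on each side is at most $k' < 0.5$; this verifies that $m$ satisfies the median condition from the definition preceding Definition \ref{def:hm}. Uniqueness of $m$ follows from $c(t) > 0.5$: shifting to $m-1$ would force $\sum_{i \geq m} w_i \geq c(t) > 0.5$, violating the upper-side condition, and symmetrically for $m+1$. Hence $\mathit{wmedian}(F, a') = \mathit{se}(F, a')_m = t(a')$, giving $\mathit{wmedian}(F, a) = 1$ and $\mathit{wmedian}(F, b) = 0$.

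It follows immediately that $\mathit{hm}(F, A)$ orders $b \prec a$, exactly as $\mathit{hm}(F_y, A)$ does, and therefore $\mathit{hm}(F, A) = \mathit{hm}(F_y, A)$. But by Definition \ref{def:dominant-subset}, a dominant subset $T_d$ must satisfy $\mathit{hm}(F, A) = \mathit{hm}(F_d, A)$ and $\mathit{hm}(F_d, A) \neq \mathit{hm}(F_y, A)$; chaining these gives $\mathit{hm}(F, A) \neq \mathit{hm}(F_y, A)$, contradicting what we established. So no such $T_d$ exists for this $F_y$ and $A$, and by Definition \ref{def:fanaticism} the functional $\mathit{hm}$ fails to be $k$-fanatical. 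Since $k \in (0, 0.5)$ was arbitrary, the claim follows.

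The main obstacle I anticipate is the median-uniqueness argument: the weighted median definition allows an averaging fallback when two indices $m_1, m_2$ both satisfy the side conditions, and theories in $T_d$ could in principle create ties in $\mathit{se}(F, a')$ at the value $t(a')$. The saving observation is that the strict inequality $c(t) > 0.5$ rules out any valid median index other than the position of $t$, because any adjacent candidate index would have to include $t$'s entire credence on one side, breaking the required $\leq 1/2$ bound. Everything else reduces to unpacking the definitions.
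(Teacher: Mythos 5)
Your proposal is correct and follows essentially the same route as the paper's own proof: the identical single-theory counter-example $F_y = (\{t\},c_y)$ with $t(a)=1$, $t(b)=0$, the observation that any admissible extension leaves $c(t) > 0.5$, and the same median-position and uniqueness argument showing $\mathit{wmedian}(F,a') = t(a')$, hence $\mathit{hm}(F,A) = \mathit{hm}(F_y,A)$ and no dominant subset can exist. The uniqueness subtlety you flag is exactly the one the paper resolves, and by the same contradiction based on $c(t)>0.5$.
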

\begin{proof}
We prove that $\mathit{hm}$ is not $k$-fanatical for any $k\in(0, 0.5)$, by giving a counter-example such that there cannot exist a dominant subset. Let $A$ be a set of actions  and $F_y = (T_y, c_y)$ an ethical framework. By the definition of fanaticism (Definition \ref{def:fanaticism}) if  there is no framework $F$ such that: $F$ is obtained by extending $F_y$ with $T = T_y \bigcup T_d$ and  $T_d$ is any subset of theories with at most $k$ credence, and $T_d$ is a dominant subset, then $\mathit{hm}$ is not $k$-fanatical.

Let $T_y = \{t\}$ and $A = \{a, b\}$ such that $c_y(t) = 1$, $t(a) = 1$ and $t(b) = 0$. Let $F$ be obtained by extending $F_y$ with $T = T_y \bigcup T_d$, where $T_d$ is any subset of theories with at most $k$ credence, i.e. $\sum_{t' \in T_d}c(t') \le k$. Given that credences in $T = \{t\} \bigcup T_d$ sum up to 1, we have that $c(t) \ge 1-k$. Since $k < 0.5$ it must  be that $c(t) > 0.5$

We  now show that by the properties of the weighted median, and given that  $c(t) > 0.5$, then it must be that for any $a' \in A$: $\mathit{wmedian}(F, a') = t(a')$ . Let $se(F, a')$ be a sorted list of evaluations of $a'$ and $st(F, a')$ the corresponding sorted list of theories. Let $j$ be a number such that $st(F, a')_j = t$. The equations 
\begin{equation}\label{eq:hm:1}
\sum_{i \in [1, m]}c(st(F,a')) \le 1/2 \textrm{ and } \sum_{i \in [m+1,n]}c(st(F,a')) \le 1/2
\end{equation}
are satisfied by $m = j$. This is because $c(t) > 0.5$ and so for any subset of theories $T' \subseteq (T \setminus \{t\})$, it is the case that $\sum_{t' \in T'}c(t') < 0.5$.

Now either $m$ is uniquely determined or there are two such values for $m$. If $m$ is uniquely determined (and thus $m = j$) then by definition of $\mathit{wmedian}$ for any $a'$, we have that $\mathit{wmedian}(F, a') = st(F, a')_j = t(a')$.
Otherwise, if $m$ is not uniquely determined, we arrive at a contradiction. To see this, assume that there exists $l \ne j$ such that $l$ satisfies Equations \ref{eq:hm:1}. Note that either $j < l$ or $j > l$. If $j < l$, then
\begin{equation}
\sum_{i \in [1, l]}c(st(F,a')) = c(t) + \sum_{i \in [1, l], i \ne j}c(st(F,a')) > 0.5
\end{equation}
(given $t = st(F, a')_j$ and $c(t) > 0.5$). Therefore, $j < l$ cannot be the case. But symmetrically, if $j > l$ then
\begin{equation}
\sum_{i \in [l+1, n]}c(st(F,a')) = c(t) + \sum_{i \in [l+1, n], i \ne j}c(st(F,a')) > 0.5
\end{equation}
and so $j > l$ also cannot be the case. Therefore, $m$ must be uniquely determined and so for any $a' \in A$, $\mathit{wmedian(F, a')} = t(a')$.

Let $\preceq_F$ denote $\mathit{hm}(F, A)$. Note that $\mathit{wmedian}(F, b) = t(b) = 0$ and $\mathit{wmedian}(F, a) = t(a) = 1$. Hence, by definition of $\mathit{hm}$ (Definition \ref{def:hm}) it must be the case that $b \prec_F a$.

Let $\mathit{hm}(F_y, A)$ = $\preceq_{y}$. Trivially, the weighted median is such that $\mathit{wmedian}(F_y, b) = t(b) = 0$ and $\mathit{wmedian}(F_y, a) = t(a) = 1$. Thus, $b \prec_y a$ holds. Therefore,  $\preceq_F = \preceq_y$.

Since $\mathit{hm}(F_y, A) = \mathit{hm}(F, A)$, $T_d$ cannot be a fanatical subset. Therefore, for arbitrary $k \in (0, 0.5)$, $\mathit{hm}$ is not $k$-fanatical.
\end{proof}
\end{document}